\newtheorem{theorem}{Theorem}[section]
\newtheorem{proposition}[theorem]{Proposition}
\theoremstyle{nonumberplain}
\newtheorem{proof}{Proof}
\newcommand{\N}{\mathcal{N}}
\newcommand{\R}{\mathbb{R}}
\newcommand{\E}{\mathbb{E}}
\newcommand{\bigO}{\mathcal{O}}
\newcommand{\cL}{\mathcal{L}}
\DeclarePairedDelimiter\norm{\lVert}{\rVert}
\renewcommand{\epsilon}{\ensuremath\varepsilon}
\newcommand{\zb}{\textbf{z}} 
\newcommand{\xb}{\textbf{x}}
\newcommand{\yb}{\textbf{y}} 
\newcommand{\fb}{\textbf{f}}
\newcommand{\ub}{\textbf{u}}
\newcommand{\ybt}{\tilde{\yb}}
\newcommand{\Ab}{\textbf{A}}
\newcommand{\Fb}{\textbf{F}}
\newcommand{\mub}{\bm{\mu}}
\newcommand{\sigbt}{{\tilde{\bm{\sigma}}}}
\newcommand{\sigbtt}{{\tilde{\bm{\sigma}}}^{-2}}
\newcommand{\Zb}{\textbf{Z}} 
\newcommand{\Xb}{\textbf{X}}
\newcommand{\Yb}{\textbf{Y}}
\newcommand{\Kb}{\textbf{K}} 
\newcommand{\Ib}{\textbf{I}}
\newcommand{\Ub}{\textbf{U}}
\newcommand{\Pb}{\textbf{P}}
\newcommand{\Lb}{\textbf{L}}
\newcommand{\Vb}{\textbf{V}}
\newcommand{\ELBO}{\cL}
\newcommand{\KL}{\text{KL}}
\newcommand{\diag}{\text{diag}}
\newcommand{\ppsi}{p_{\psi}}
\begin{document}

\twocolumn[

\aistatstitle{Scalable Gaussian Process Variational Autoencoders}

\aistatsauthor{ Metod Jazbec\footnotemark[1] \And Matthew Ashman \And Vincent Fortuin}

\aistatsaddress{ ETH Z\"urich \And University of Cambridge \And  ETH Z\"urich}

\aistatsauthor{ Michael Pearce \And Stephan Mandt \And Gunnar R\"atsch }

\aistatsaddress{ University of Warwick \And University of California, Irvine \And ETH Z\"urich}

\runningauthor{Jazbec, Ashman, Fortuin, Pearce, Mandt, and R\"atsch}
]

\begin{abstract}
Conventional variational autoencoders fail in modeling correlations between data points due to their use of factorized priors.
  Amortized Gaussian process inference through GP-VAEs has led to significant improvements in this regard, but is still inhibited by the intrinsic complexity of exact GP inference.
  We improve the scalability of these methods through principled sparse inference approaches.
  We propose a new scalable GP-VAE model that outperforms existing approaches in terms of runtime and memory footprint, is easy to implement, and allows for joint end-to-end optimization of all components.
\end{abstract}

\section{Introduction}

Variational autoencoders (VAEs) are among the most widely used models in representation learning and generative modeling \citep{Kingma2014Auto-encodingBayes,kingma2019introduction, rezende2014stochastic}. As VAEs typically use factorized priors, they fall short when modeling correlations between different data points. However, more expressive priors that capture correlations enable useful applications. \citet{Casale2018GaussianAutoencoders}, for instance, showed that by modeling prior correlations between the data, one could generate a digit's rotated image based on rotations of the same digit at different angles. 

Gaussian process VAEs (GP-VAEs) have been designed to overcome this shortcoming \citep{Casale2018GaussianAutoencoders}. These models introduce a Gaussian process (GP) prior over the latent variables that correlates the latent variables through a kernel function. While GP-VAEs have outperformed standard VAEs on many tasks \citep{Casale2018GaussianAutoencoders, Fortuin2019GP-VAE:Imputation, Pearce2019ThePixels}, combining the GPs and VAEs brings along fundamental computational challenges. On the one hand, neural networks reveal their full power in conjunction with large datasets, making mini-batching a practical necessity. 
GPs, on the other hand, are traditionally restricted to medium-scale datasets due to their unfavorable scaling.  In GP-VAEs, these contradictory demands must be reconciled, preferably by reducing the  $\bigO(N^3)$ complexity of GP inference, where $N$ is the number of data points.

Despite recent attempts to improve the scalability of GP-VAE models by using specifically designed kernels and inference methods \citep{Casale2018GaussianAutoencoders, Fortuin2019GP-VAE:Imputation}, a generic way to scale these models, regardless of data type or kernel choice, has remained elusive. This limits current GP-VAE implementations to small-scale datasets. In this work, we introduce the first generically scalable method for training GP-VAEs based on inducing points. We thereby improve the computational complexity from $\bigO(N^3)$ to $\bigO(bm^2 + m^3)$, where $m$ is the number of inducing points and $b$ is the batch size. 

We show that applying the well-known inducing point approaches \citep{Hensman2013GaussianData,Titsias2009VariationalProcesses} to GP-VAEs is a non-trivial task: existing sparse GP approaches cannot be used off-the-shelf within GP-VAE models as they either necessitate having the entire dataset in the memory or do not lend themselves to being amortized. To address this issue, we propose a simple hybrid sparse GP method that is amenable to both
mini-batching and amortization.

\footnotetext[1]{Contact: \href{mailto:fortuin@inf.ethz.ch}{\texttt{jazbec.metod@gmail.com}}}

We make the following contributions:

\begin{itemize}
    \item We propose the first scalable GP-VAE framework based on sparse GP inference (Sec.~\ref{sec:methods}). In contrast to existing methods, our model is agnostic to the kernel choice, makes no assumption on the structure of the data at hand and allows for joint optimization of all model components.
    \item We provide theoretical motivations for the proposed method and introduce a hybrid sparse GP model that accommodates a crucial demand of GP-VAEs for simultaneous amortization and batching.
    \item We show empirically that the proposed approximation scheme maintains a high accuracy while being much more scalable and efficient (Sec.~\ref{sec:experiments}). Importantly from a practitioner's point of view, our model is easy to implement as it requires no special modification of the training procedure.  
\end{itemize}

\section{Related Work}

\paragraph{Sparse Gaussian processes.}

There has been a
long line of work on sparse Gaussian process approximations,
dating back to \cite{Snelson2005SparsePseudo-inputs}, \cite{Quinonero-Candela2005ARegression},
and others. Most of these sparse methods rely on a summarizing set of points referred to as \emph{inducing points} and mainly differ in the exact way of selecting those. Variational learning of inducing points was first considered in \cite{Titsias2009VariationalProcesses} and was shown to lead to significant performance gains. Instead of optimizing an approximate marginal GP likelihood as done in non-variational sparse models, a lower bound on the exact GP marginal likelihood is derived and used as a training objective. Another approach relevant for our work is the stochastic variational approach from \cite{Hensman2013GaussianData}, where the authors proposed a sparse model that can, in addition to reducing the GP complexity, also be trained in mini-batches, enabling the use of GP models on (extremely) large datasets.

\paragraph{Improving VAEs.}
Extending the expressiveness and representational power of VAEs can be roughly divided into two (orthogonal) approaches. The first one focuses on increasing the flexibility of the approximate posterior \citep{rezende2016variational, Kinga2019variational}, while the second one consists of imposing a richer prior distribution on the latent space. Various extensions to the standard Gaussian prior have been proposed, including a Gaussian mixture prior \citep{Dilokthanakul2016DeepUC, kopf2019mixture}, hierarchical structured priors \citep{johnson2016composing,deng2017factorized},  and a von Mises-Fisher distribution prior \citep{Davidson2018HypersphericalAuto-encoders}. GP-VAE models are part of this second group and, contrary to other work on extending VAE priors, aim to relax the \emph{iid} assumption between data points. Moreover, GP-VAEs are also related to approaches that aim to learn more structured and interpretable representations of the data by incorporating auxiliary information, such as time or viewpoints \citep{sohn2015learning, lin2018variational, johnson2016composing}.

\paragraph{Gaussian process VAEs.}

As mentioned above, the most related approaches to our work are the GP-VAE models of \citet{Casale2018GaussianAutoencoders} and \citet{Pearce2019ThePixels}.
However, neither of these are scalable for generic kernel choices and data types. The model from \cite{Pearce2019ThePixels} relies on exact GP inference, while \cite{Casale2018GaussianAutoencoders} exploit a (partially) linear structure of their GP kernel and use a Taylor approximation of the ELBO to get around computational challenges. Another GP-VAE model is proposed in \citet{Fortuin2019GP-VAE:Imputation} where it is used for multivariate time series imputation. Their model is indeed scalable (even in linear time complexity), but it works exclusively on time series data since it exploits the Markov assumption. Additionally, it does not support a joint optimization of GP parameters, but assumes a fixed GP kernel.

\section{Scalable SVGP-VAE}
\label{sec:methods}

This work's main contribution is the sparsification of the GP-VAE using the sparse GP approaches mentioned above. To this end, two separate variational approximation problems have to be solved jointly: an outer amortized inference procedure from the high-dimensional space to the latent space, and the inner sparse variational inference scheme on the GP. To motivate our proposed solution, we begin by pointing out the problems that arise when na\"ively combining the two objectives.

\subsection{Problem setting and notation}
In this work, we consider high-dimensional data 
$\Yb = [\yb_1, \dots, \yb_N]^\top \in \mathbb{R}^{N \times K}$. Each data point has a corresponding low-dimensional auxiliary data entry, summarized as 
$\Xb = [\xb_1, \dots, \xb_N]^\top \in \mathcal{X}^{N}, \mathcal{X}\subseteq \R^D$.
For example, $\yb_i$ could be a video frame and $\xb_i$ the corresponding time stamp. Our goal is to train a model for (1) generating  $\Yb$ conditioned on $\Xb$ and (2) infering an interpretable
and disentangled low-dimensional representations.
 
To this end, we adopt a latent GP approach, summarized below. First, we need to model a prior distribution over the collection of latent variables $\Zb = [\zb_1,\dots,\zb_N]^T\in\R^{N\times L}$, each latent variable $\zb_i$ living in an $L$-dimensional latent space. To model their joint distribution, we 
assume $L$ independent latent
functions $f^1,\dots,f^L \sim GP(0, \: k_{\theta})$
with kernel parameters
$\theta$ that result in $\Zb$ when being evaluated on $\Xb$. More precisely, $\zb_i = [f^1(\xb_i),\dots,f^L(\xb_i)]$. 
By construction, the $l^{th}$ latent channel of all latent variables $\zb^{l}_{1:N}\in\R^N$ (the $l^{th}$ column of $\Zb$) has a correlated Gaussian prior
with covariance $\Kb_{NN}=k_\theta(\Xb, \Xb)$. %
Setting $\Kb_{NN}=I$ recovers the fully factorized prior commonly used
in standard VAEs.

As in regular VAEs, each $\zb_i\in\R^L$ is then ``decoded" to parameterize the distribution over observations
$\yb_i = \mu_{\psi}(\zb_i) + \bm{\epsilon}_i$ where $\mu_\psi:\R^L\to\R^K$ is a network with parameters $\psi$ and $\bm{\epsilon}_i \sim \mathcal{N}(\bm{0}, \: \sigma_y^2 \: \Ib_K)$.
Mathematically, the full generative model is given by
\begin{align*}
    p_\theta(\Zb|\Xb) &= \prod_{l=1}^L\N(\zb_{1:N}^{l}| 0, \textbf{K}_{NN}), \\
    p_\psi(\Yb|\Zb) &= \prod_{i=1}^Np_\psi(\yb_i|\zb_i) = \prod_{i=1}^N\N(\yb_i| \mu_\psi(\zb_i), \sigma_y^2 \: \Ib_K).
\end{align*}
The joint distribution is $p_{\psi,\theta}(\Yb, \Zb|\Xb) = p_\psi(\Yb|\Zb)p_\theta(\Zb|\Xb)$.
The true posterior for the latent variables
$p_{\psi,\theta}(\Zb | \Yb, \Xb) = p_{\psi,\theta}(\Yb, \Zb|\Xb) / p_{\psi,\theta}(\Yb|\Xb)$ 
is intractable due to the denominator which requires integrating over $\Zb$. Hence, approximate
inference methods are required to infer the unobserved $\Zb$ given the observed $\Xb$ and $\Yb$.

\subsection{Amortized variational inference}

Amortization in the typical VAE architecture
uses a second (inference) network from the high-dimensional data $\yb_i$ to
the mean and variance
of a fully factorized Gaussian distribution over $\zb_i\in\R^L$ \citep{zhang2018advances}. We denote it as $\tilde q_\phi(\zb_i|\yb_i) = \mathcal{N}(\zb_i|\mu_\phi(\yb_i), \diag(\sigma^2_\phi(\yb_i)))$ 
and it has network parameters $\phi$. In 
\citet{Casale2018GaussianAutoencoders}, this Gaussian
distribution is used directly to approximate the posterior, $p_{\psi,\theta}(\Zb|\Yb)\approx \prod_i\tilde{q}_\phi(\zb_i|\yb_i)$. %
While this approach mirrors classical VAE design, the approximate posterior for
a latent variable $\zb_i$ only depends on $\yb_i$ and ignores $\xb_i$. This is
in stark contrast to traditional Gaussian processes where
latent function values
$f(x)$ are informed by all $y$ values according to the similarity of
the corresponding $x$ values.

Building on this model, \citet{Pearce2019ThePixels} instead 
proposed to use the inference network $\tilde q_\phi(\zb_i|\yb_i)$ to
replace only the intractable likelihood $p_\psi(\yb_i|\zb_i)$ in the posterior. By combining $\tilde q_{\phi}$ with tractable terms, the approximate posterior could be explicitly normalized as 
\begin{align}\label{eq:GP_reg_view}
    q(\Zb | \Yb, \Xb, \phi, \theta) := \prod_{l=1}^L\frac{\prod_{i=1}^N\Tilde{q}_{\phi}(\zb^{l}_{i} | \yb_i) \, p_{\theta}(\zb^{l}_{1:N} | \Xb)}{Z^{l}_{\phi,\theta}(\Yb, \Xb)},
\end{align}
where the normalizing constant $Z^{l}_{\phi,\theta}(\Yb, \Xb)$ can be computed
analytically. Noting the symmetry of the Gaussian distribution,
$\mathcal{N}(z| \mu, \sigma) = \mathcal{N}(\mu |z, \sigma)$,
the approximate posterior for channel $l$ is mathematically equivalent to the (exact) GP posterior in the
traditional GP regression with inputs $\Xb$
and outputs $\ybt_l:=\mu^l_\phi(\Yb)$ with heteroscedastic
noise $\sigbt_l := \sigma^l_\phi(\Yb)$. 
We therefore refer to each $\{\Xb, \ybt_l, \sigbt_l\}$ %
as the \emph{latent dataset} for the $l^{th}$ channel. Each normalizing constant of Equation~\ref{eq:GP_reg_view}
is also the GP marginal likelihood of the $l^{th}$ latent dataset.
The parameters $\{\psi, \phi, \theta\}$ are learnt by maximizing the
evidence lower bound (ELBO) in the \textbf{P}earce model,
\begin{align}\label{eq:PearceELBO}
 \cL_P(\psi,\phi, \theta) = \; \sum_{i=1}^N &\mathbb{E}_{q(\zb_i|\cdot)}  \bigg[  \log \ppsi(\yb_i | \zb_i) - \log \tilde{q}_\phi(\zb_i | \yb_i)\bigg] \nonumber \\ &+ \sum_{l=1}^L \log Z_{\phi, \theta}^{l}(\Yb, \Xb). 
\end{align}
The first term is the difference between the true likelihood
and inference network approximate likelihood, while the second term is
the sum over GP marginal likelihoods of each latent dataset. 

One subtle, yet important, characteristic of the variational approximation from \cite{Pearce2019ThePixels} is that it gives rise to the ELBO $ \cL_P(\cdot)$ that contains the GP posterior. Note that this is in contrast to  \cite{Casale2018GaussianAutoencoders} and \cite{Fortuin2019GP-VAE:Imputation}, where the GP prior is part of the ELBO. As we will show in Section \ref{sec:latent_sparse_GP}, the ELBO that contains the GP posterior naturally lends itself to "sparsification" through the use of sparse GP posterior approximations.

The computational challenges of $\cL_P(\cdot)$ are twofold.
Firstly, for the latent GP regression, an inverse
and a log-determinant of the kernel matrix 
$\Kb_{NN} \in \R^{N \times N}$ must be computed,
resulting in $\bigO(N^3)$ time complexity. Secondly, the
ELBO does not decompose as a sum over data points, so the
entire dataset $\{\Xb,\Yb\}$ is needed for one evaluation of $\cL_P(\cdot)$.

Given the latent dataset, at first glance, we may simply apply
sparse GP regression techniques instead of traditional regression. 
We next look at two widely used methods (\cite{Titsias2009VariationalProcesses} and \cite{Hensman2013GaussianData}) and highlight their drawbacks
for this task. We then propose a new hybrid approach solving these
issues.

\subsection{Latent Sparse GP Regression}
\label{sec:latent_sparse_GP}
To simplify the notation, we focus on a single channel and
suppress $l$, resulting in  $\ybt$ and $\bm{\tilde{\sigma}}$,
$\log Z_{\theta, \phi}(\cdot)$ and $f$.
Given an (amortized latent) regression dataset $\Xb,\, \ybt,\, \bm{\tilde{\sigma}}$,
sparse Gaussian process methods assume that there exists a set of $m\ll N$ 
inducing points with inputs $\Ub=[\ub_1,\dots,\ub_m]\in \mathcal{X}^m$ and outputs 
$\fb_m:=f(\Ub)\sim\mathcal{N}(f(\Ub)|\: \mub, \Ab)$ that summarize the regression dataset.
$\Ub,\, \mub,\, \Ab$ are parameters to be learnt. Given a (test) set
of $r$ new inputs $\Xb_r$, the sparse approximate (predictive) distribution
over outputs $\fb_r = f(\Xb_r)$ is
\begin{multline}%
    q_S(\fb_r|\Xb_r, \Ub, \mub, \Ab, \theta) =\\
    \mathcal{N}\big( \fb_r |\Kb_{rm} \Kb_{mm}^{-1} \mub, \: \Kb_{rr} -  \Kb_{rm}\Kb_{mm}^{-1}\Kb_{mr} \\
    \quad\quad\quad\quad\quad\quad\quad \quad \,\, +\Kb_{rm}\Kb_{mm}^{-1}\textbf{A} \Kb_{mm}^{-1} \Kb_{mr}\big),  \label{eq:svgp_approx_post}
\end{multline}
where kernel matrices are $\Kb_{mm}=k_\theta(\Ub,\Ub)$,  $\Kb_{rr}=k_\theta(\Xb_r,\Xb_r)$, and $\Kb_{mr}=\Kb_{rm}^{\top}=k_\theta(\Ub, \Xb_r)$. 
By introducing inducing points, the 
cost of learning the model is reduced from $\bigO(N^3)$
in $\log Z_{\phi, \theta}(\cdot)$ to $\bigO(Nm^2)$ in a
modified objective. 

We next describe two of the most
popular ways to learn the variational parameters $\Ub,\, \mub,\, \Ab$
that are based on a second \emph{inner} variational approximation
for the Gaussian process regression that lower bounds $\log Z_{\phi, \theta}(\cdot)$.
For this second inner variational inference, we aim to learn
a cheap $q_S(\cdot)$ (Equation~\ref{eq:svgp_approx_post}) that closely
approximates the expensive $q(\cdot)$ (Equation~\ref{eq:GP_reg_view}). 

\textbf{\citet{Titsias2009VariationalProcesses}}. %
Let $\zb=\zb_{1:N}^l$, then
the parameters $\Ub,\, \mub,\, \Ab$ may be learnt by
minimizing $\KL\big(q_S(\zb\,|\cdot) \: || \: q(\zb \,|\cdot) \big)$,
or equivalently by maximizing a lower bound to the 
marginal likelihood of the latent dataset $\log Z_{\phi, \theta}^{l}(\cdot)$.
Let $\bm{\Sigma} := \Kb_{mm} +\Kb_{mN}\diag(\sigbtt)\Kb_{Nm} \:$, 
then the optimal $\mub$ and $\Ab$ may be found analytically:
\begin{align} \label{eq:mu_star}
    \mub_T &= \Kb_{mm} \bm{\Sigma}^{-1} \Kb_{mN} \diag(\sigbtt) \ybt, \\
    \Ab_T &= \Kb_{mm}\bm{\Sigma}^{-1}\Kb_{mm},  \label{eq:A_star} %
\end{align}
where $\Kb_{mN}=k_\theta(\Ub, \Xb)$. Plugging $\mub_T$
and $\Ab_T$ back into the appropriate evidence lower
bound yields the final lower bound for learning $\Ub$ in the \textbf{T}itsias model
\begin{align}\label{eq:ELBO_T}
    \ELBO_{T}(\Ub, \phi, \theta)  &= \\
     \log \mathcal{N}\big(\ybt | &\textbf{0}, \: \Kb_{Nm} \Kb_{mm}^{-1}\Kb_{mN} + \diag(\sigbt^2)\big)  \nonumber\\
    \quad -& \frac{1}{2}Tr\big(\diag(\sigbtt)\:(\Kb_{NN} - \Kb_{Nm} \Kb_{mm}^{-1}\Kb_{mN})\big).\nonumber 
\end{align}
Note that the bound is a function of $\ybt$ and $\sigbt$
which depend on the inference network with parameters $\phi$
and the kernel matrices which depend upon $\theta$ hence we make
these arguments explicit.
In the full GP-VAE ELBO $\cL_P(\cdot)$, substituting $q_S(\cdot)$, $\cL_T(\cdot)$
in place of $q(\cdot)$, $\log Z_{\phi,\theta}(\cdot)$
yields a sparse GP-VAE ELBO that can be readily used to reduce
computational complexity of existing GP-VAE methods for a generic
dataset and an arbitrary GP kernel function.\footnote{As an aside, this sparse GP-VAE ELBO may also be derived in the standard way
using $\KL\big(q_S(\Zb|\cdot)||p_{\psi, \theta}(\Zb|\Yb, \Xb)\big)$, see Appendix B.4.} %

However, observe from 
Equations~\ref{eq:mu_star}, \ref{eq:A_star} and \ref{eq:ELBO_T} that 
the entire dataset $\{\Xb, \Yb\}$ enters through $\Kb_{NN}$
and $\ybt$, $\sigbt$ respectively. Therefore, this ELBO is not 
amenable to mini-batching and has large memory requirements.

\textbf{\citet{Hensman2013GaussianData}}. 
In order to make variational sparse GP regression amenable to mini-batching, \citet{Hensman2013GaussianData}
proposed an ELBO that lower bounds $\ELBO_{T}$ and, more importantly,  decomposes as a sum of terms over data points.
Adopting our notation with explicit parameters, the
\textbf{H}ensman ELBO is given by
\begin{multline} 
\label{eq:ELBO_H} 
    \ELBO_{H}(\Ub, \bm{\mu},  \textbf{A}, \phi, \theta) = - \KL\big(q_S(\fb_m|\cdot) \: || \: p_\theta(\fb_m|\cdot)\big) \\
    +\sum_{i=1}^N \bigg\{ \log\mathcal{N}\big(\tilde{y}_i | \bm{k}_i \Kb_{mm}^{-1}\bm{\mu}, \: \tilde{\sigma}_i^{-2} \big) 
       \: - \\ \frac{1}{2 \tilde{\sigma}_i^{2}} \: (\Tilde{k}_{ii} + Tr(\textbf{A}\: \Lambda_i))  \bigg\}.
\end{multline}

Above, $\bm{k}_i$ is the $i$-th row of $\Kb_{Nm}$,
$\Lambda_i = \Kb_{mm}^{-1}\bm{k}_i \bm{k}_i^\top\Kb_{mm}^{-1}$
and $\Tilde{k}_{ii}$ is the $i$-th diagonal element of the
matrix $\Kb_{NN} - \Kb_{Nm} \Kb_{mm}^{-1}\Kb_{mN}$. Due to
the decomposition over data points, the gradients $\nabla\ELBO_{H}(\cdot)$ in
stochastic or mini-batch gradient descent are unbiased and only the data in
the current batch are needed in memory for the gradient updates. Consequently, 
with batch size $b$ the GP complexity is further reduced to $\bigO(bm^2 + m^3)$. Note that for $\mub = \mub_T, \Ab= \Ab_T$ and $b =N$, $\ELBO_H(\cdot)$ recovers $\ELBO_T(\cdot)$ \citep{Hensman2013GaussianData}.

While this method may seem to meet our requirements, it has a fatal drawback. 
Firstly, it is not amortized as $\mub$ and $\Ab$ are not 
functions of the observed data $\{\Xb, \, \Yb\}$ but instead need to be optimized once
for each dataset.
Secondly, as a consequence, in the full GP-VAE ELBO $\cL_P(\cdot)$, 
substituting $q_S(\cdot)$, $\cL_H(\cdot)$ in place of $q(\cdot)$, $\log Z_{\phi, \theta}(\cdot)$
and simplifying yields the following expression
\begin{align}\label{eq:SVGP-VAE_Hensman_ELBO}
\cL_{PH}&(\Ub, \psi, \theta, \mub^{1:L}, \Ab^{1:L}) = \\
\sum_{i=1}^N \mathbb{E}_{q_S}  &\bigg[  \log \ppsi(\yb_i | \zb_i) \bigg] - \sum_{l=1}^L KL\big(q_S^{l}(\fb_m|\cdot) \: || \: p_\theta^l(\fb_m|\cdot)\big) \nonumber
\end{align}
where $q_S^l(\fb_m|\cdot) = \mathcal{N}(\fb_m|\mub^l, \Ab^l)$. 

Note that the ELBO above is not a function of the inference network
parameters $\phi$ (for the full derivation, we refer to Appendix B.1). %
The sparse approximate posterior is parameterized by $\Ub, \mub, \Ab, \theta$ which
are all treated as free parameters to be optimized, that is, they are not %
functions of the latent dataset or the inference network. 
Maximizing the full GP-VAE ELBO is equivalent to minimizing the KL divergence from
the approximate to the true posterior and neither of these depend upon the latent dataset or the inference network.
Therefore, using the Hensman sparse GP within an amortized GP-VAE model
causes the ELBO to be independent of the inference network parameters. 
Hence, this method also cannot be used as-is to amortize the sparse GP-VAE with mini-batches.

\subsection{The best of both ELBOs}
\label{sec:best_ELBOs}

Recall our goal to make GP-VAE models amenable to large datasets. This requires avoiding the large memory requirements and
being able to amortize inference. To alleviate these problems, \cite{Casale2018GaussianAutoencoders}
propose to use a Taylor approximation of the GP prior term in their ELBO.
However, this significantly increases implementation complexity and gives rise to
potential risks in ignoring curvature. 
We take a different approach utilising sparse GPs. We desire
a model that can scale to large datasets, like \citet{Hensman2013GaussianData},
while also being able to directly compute variational parameters from
the latent regression dataset, like \citet{Titsias2009VariationalProcesses}.
To this end, we take a mini-batch of the data, $\Xb_b \subset \Xb$, $\Yb_b\subset \Yb$,
and with the network $\tilde{q}_\phi(\cdot)$ create a mini-batch
of the latent dataset $\Xb_b$, $\ybt_b$, $\sigbt_b$. Following
\citet{Titsias2009VariationalProcesses}, with Equations~\ref{eq:mu_star}
and \ref{eq:A_star} for the optimal $\mub_T$ and $\Ab_T$, we
analytically compute stochastic estimates for each latent channel $l$ given by
\begin{align}\label{eq:MC_estimators}
    \bm{\Sigma}_{b}^l &:= \Kb_{mm} + \frac{N}{b} \Kb_{mb} \, \diag(\sigbtt_b) \, \Kb_{bm} , \nonumber \\
    \mub_{b}^l &:= \frac{N}{b} \Kb_{mm}\left({\bm{\Sigma}^l_b}\right)^{-1} \Kb_{mb}\, \diag(\sigbtt_b)\,\ybt^l_b, \nonumber\\
    \Ab_{b}^l &:= \Kb_{mm}\left({\bm{\Sigma}^l_b}\right)^{-1}\Kb_{mm}. \:
\end{align}
where $\Kb_{mb}=k_\theta(\Ub, \Xb_b)\in \R^{m\times b}$.
For a full derivation of these estimators, see Appendix B.2. %
All these estimators are consistent, so they converge to
the true values for $b \rightarrow N$.
However, while $\bm{\Sigma}^l_b$ is an unbiased
estimator for $\bm{\Sigma}^l$, the same does not
hold for $\mub^l_{b}$ and $\Ab_{b}^l$. We
investigate the magnitude of the bias in
Appendix C.4 %
finding that it is generally
small in practice. We believe this result to be in line
with sparse Gaussian process approximations that assume
the whole dataset may be summarized by a set of inducing
points. Alternatively, this may be interpreted as
assuming that the dataset contains redundancy,
that is, that we have more than enough data to learn the latent function. In such a
case, (cheaply) learning an average of latent functions of
multiple mini-batches would closely approximate (expensively)
learning one latent function using the full dataset.

$\mub_{b}^l$ and $\Ab_{b}^l$ parameterize the approximate posterior
$q_S(\cdot)$ which is, therefore, a direct function of the data $\Xb_b$, $\Yb_b$ 
and hence it is an amortized approximate posterior. By taking a mini-batch
of data, one may assume that we may also compute $\cL_T(\cdot)$
of the mini-batch latent dataset. However, note that such an $\cL_T(\cdot)$ 
is a lower bound for $\log Z_{\phi, \theta}(\cdot)$ of the mini-batch latent
dataset, not a lower bound for the full latent dataset.
Instead, we use $\mub_{b}^l$ and $\Ab_{b}^l$ along with $\Ub$ and $\theta$ 
to compute the GP evidence lower bound of
\citet{Hensman2013GaussianData} given in Equation \ref{eq:ELBO_H},
which is also suitable to mini-batching and lower bounds the
marginal likelihood of the full latent dataset. 
Finally, the evidence lower bound of our \textbf{S}parse (\textbf{V}ariational) \textbf{G}aussian \textbf{P}rocess
\textbf{V}ariational \textbf{A}uto\textbf{e}ncoder, for a single mini-batch $\Xb_b, \Yb_b$, is thus
\begin{multline}\label{eq:SVGP-VAE_ELBO}
    \ELBO_{SVGP-VAE}\big(\Ub, \psi, \phi, \theta):= \\
    \sum_{i=1}^b\E_{q_S}  \bigg[  \log \ppsi(\yb_i | \zb_i) - %
     \log \tilde{q}_\phi(\zb_i | \yb_i)\bigg] \\
    + \frac{b}{N}\sum_{l=1}^L \ELBO_{H}^{l}(\Ub, \, \phi, \, \theta, \, \mub_{b}^l, \,\, \Ab_{b}^l),
\end{multline}
where each $\cL^l_H(\cdot)$ is computed using the mini-batch
of the latent dataset $\Xb_b$, $\ybt_b^l$, $\tilde{\bm{\sigma}}_b^l$.
By naturally combining well known
approaches, we arrive at a sparse GP-VAE that is both amortized and
can be trained using mini-batches. The VAE parameters $\phi, \psi$, inducing points $\Ub$, and the GP kernel $\theta$ can all be optimized jointly in an end-to-end fashion as we show in the next section.

Also note that during training, $\mub_{b}^1,...,\mub^L_b$ and
$\Ab_{b}^1,...,\Ab_{b}^L$ are computed from a mini-batch
$\Xb_b$, $\Yb_b$. However at test time, given a new dataset, 
all available data $\Xb$, $\Yb$ may be used to compute the $\mub^1,..,\mub^L$ 
and $\Ab^1,...,\Ab^L$. The Gaussian process structure places
no theoretical restriction upon the number of observations
that are incorporated into the approximate posterior parameters,
any amount of data can be pooled simply according to the kernel operations.
In contrast, neural networks typically assume fixed input and
output sizes and pooling data in a principled way requires much more
attention.

While we have treated the auxiliary data $\Xb$ as observed throughout this section, our model can also be used when $\Xb$ is not given (or is only partly observed). In such cases, we make use of the Gaussian Process Latent Variable Model (GP-LVM) introduced by \cite{Lawrence2004GaussianData} to learn the missing part of $\Xb$, similar to what is done in \cite{Casale2018GaussianAutoencoders}. In SVGP-VAE, (missing parts of) $\Xb$ can be learned jointly with the rest of the model parameters.

\section{Experiments}
\label{sec:experiments}

We compared our proposed model with existing approaches measuring both performance and scalability on some simple synthetic data and large high-dimensional benchmark datasets. Implementation details can be found in Appendix A %
and additional experiments in Appendix C. %
The implementation of our model as well as our experiments are publicly available at \url{https://github.com/ratschlab/SVGP-VAE}.

\subsection{Synthetic moving ball data}
\label{sec:exp_ball}

\begin{figure}
    \centering
    \includegraphics[width=1.0\linewidth]{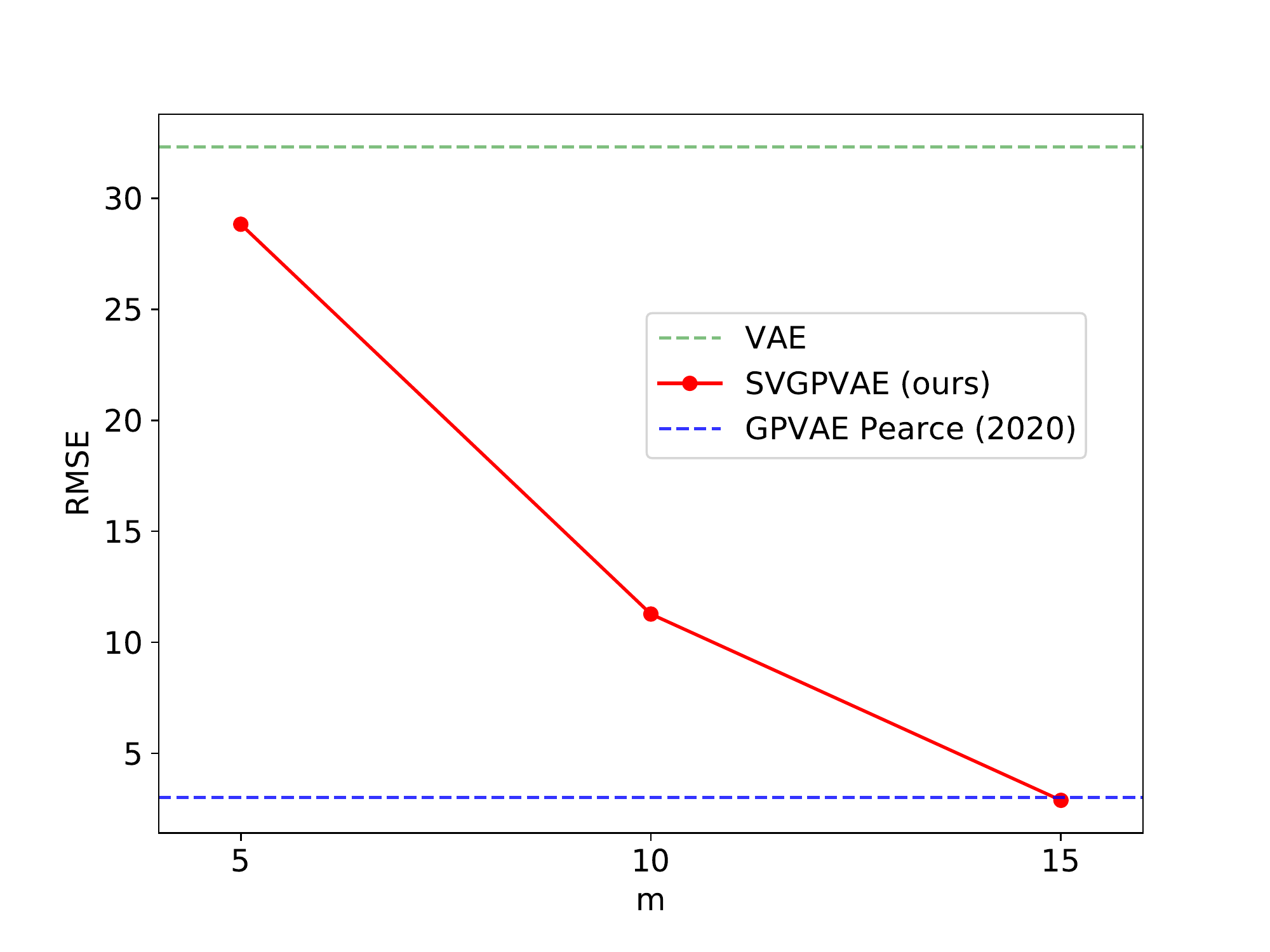}
    \caption{Performance of our SVGP-VAE models as a function of the number of inducing points. We see that as we increase the number of inducing points, the performance gracefully approaches the one of the exact GP-VAE baseline model.}
    \label{fig:ball_scaling_curve}
\end{figure}

The moving ball data was utilized in \citet{Pearce2019ThePixels}.
It consists of black-and-white videos of a moving circle, where the 2D trajectory is sampled from a GP with radial basis function (RBF) kernel.
The goal is to reconstruct the correct underlying trajectory in the two-dimensional latent space from the frames in pixel space.
Since the videos are short (30 frames), full GP inference is still feasible in this setting, such that we can compare our sparse approach against the gold standard. Note that due to the small dataset size we do not perform mini-batching within each video here.

\begin{figure}
    \centering
    \includegraphics[width=1.0\linewidth]{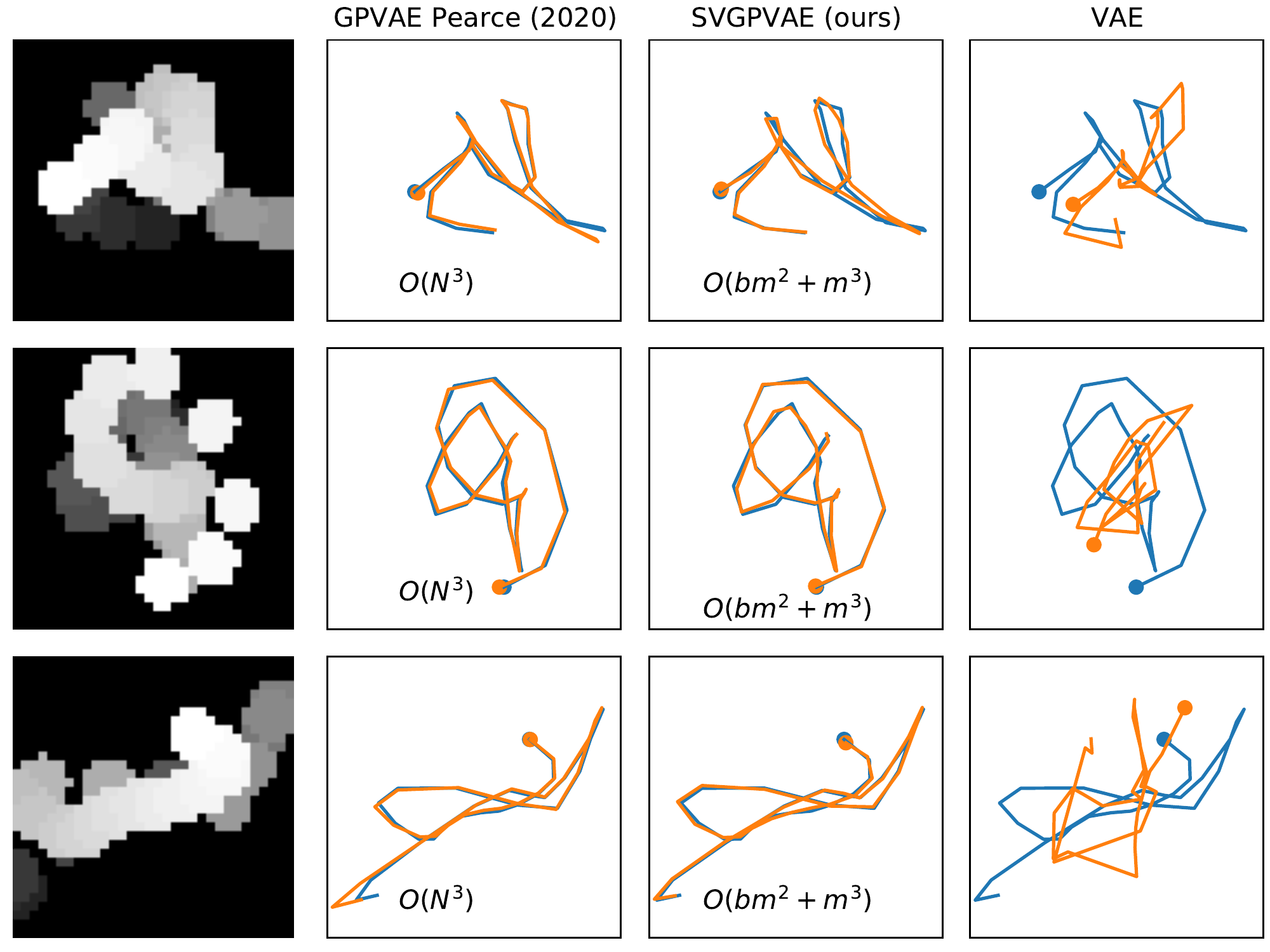}
    \caption{Reconstructions of the latent trajectories for the moving ball data. Frames of each test video are overlaid and shaded by time in the first column. Ground truth trajectories are depicted in blue, while predicted trajectories are shown in orange. We can see that the standard VAE fails to model the trajectories faithfully, while the GP-VAE models (including our sparse approximation) match them closely. Note that $b = N$ in SVGP-VAE for this experiment. For SVGP-VAE, the number of inducing points was set to $m=15$.}
    \label{fig:ball_trajectories}
\end{figure}

\paragraph{Scaling behavior.}
We see in Figure~\ref{fig:ball_scaling_curve} that as we increase the number of inducing points our method uses, its performance in terms of root mean squared error (RMSE) approaches the performance of the full GP baselines. It reaches the baseline performance already with 15 inducing points, which is half the number of data points in the trajectory and therefore  four times less computationally intensive than the baseline.
The reconstructions of the trajectories also qualitatively agree with the baseline, as can be seen in Figure~\ref{fig:ball_trajectories}.

\paragraph{Optimization of kernel parameters.}

Another advantage of our proposed method over the previous approaches is that it is agnostic to the kernel choice and even allows to optimize the kernel parameters (and thereby learn a better kernel) jointly during training. In \cite{Pearce2019ThePixels}, joint optimization of kernel parameters was not considered, while in \cite{Casale2018GaussianAutoencoders} a special training regime is deployed where VAE and GP parameters are optimized at different stages. Since the moving ball data is generated by a GP, we know the optimal kernel length scale for the RBF kernel in this case, which is namely the one of the generating process.
We optimized the length scale of our SVGP-VAE kernel and found that when using a sufficient number of inducing points, we indeed recover the true length scale almost perfectly (Fig.~\ref{fig:ball_lengthscales}).
Note that when too few inducing points are used, the \emph{effective} length scale of the observed process in the subspace spanned by these inducing points is indeed larger, since some of the variation in the data will be orthogonal to that subspace.
It is thus to be expected that our model would also choose a larger length scale to model the observations in this subspace.

\begin{figure}
    \centering
    \includegraphics[width=0.9\linewidth]{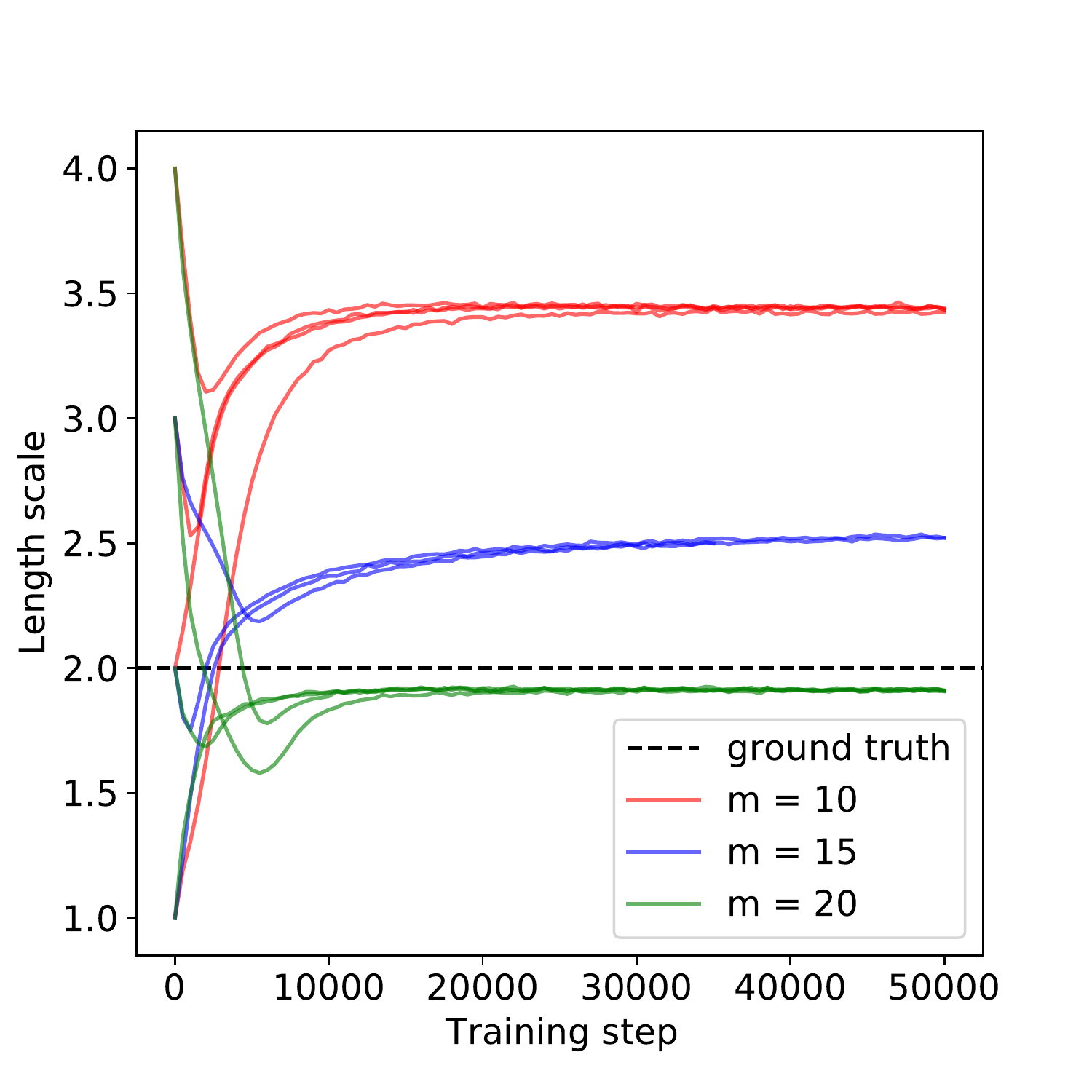}
    \caption{Optimized length scales of our SVGP-VAE model during training on the moving ball data. With sufficiently many inducing points, the model recovers the true length scale of the generating process.}
    \label{fig:ball_lengthscales}
\end{figure}

\paragraph{Optimization of inducing points.}
When working with sparse Gaussian processes, the selection of inducing point locations can often be crucial for the quality of the approximation \citep{Titsias2009VariationalProcesses, fortuin2018scalable, jahnichen2018scalable, burt2019rates}.
In our model, we can optimize these inducing point locations jointly with the other components.
On the moving ball data, since the trajectories are generated from stationary GPs, the optimal inducing point locations should be roughly equally spaced along the time dimension.
When we adversarially initialize the inducing points in a small region of the time series, we see that the model pushes them apart over the course of training and converges to this optimal spacing (Fig.~\ref{fig:ball_inducing_points}).
Together with the previous experiment, these observations suggest that the model is able to choose close-to-optimal inducing points and kernel functions in a data-driven way during the normal training process.

\subsection{Conditional generation of rotated MNIST digits}\label{sec:MNIST}

\begin{figure}
    \centering
    \includegraphics[width=1.0\linewidth]{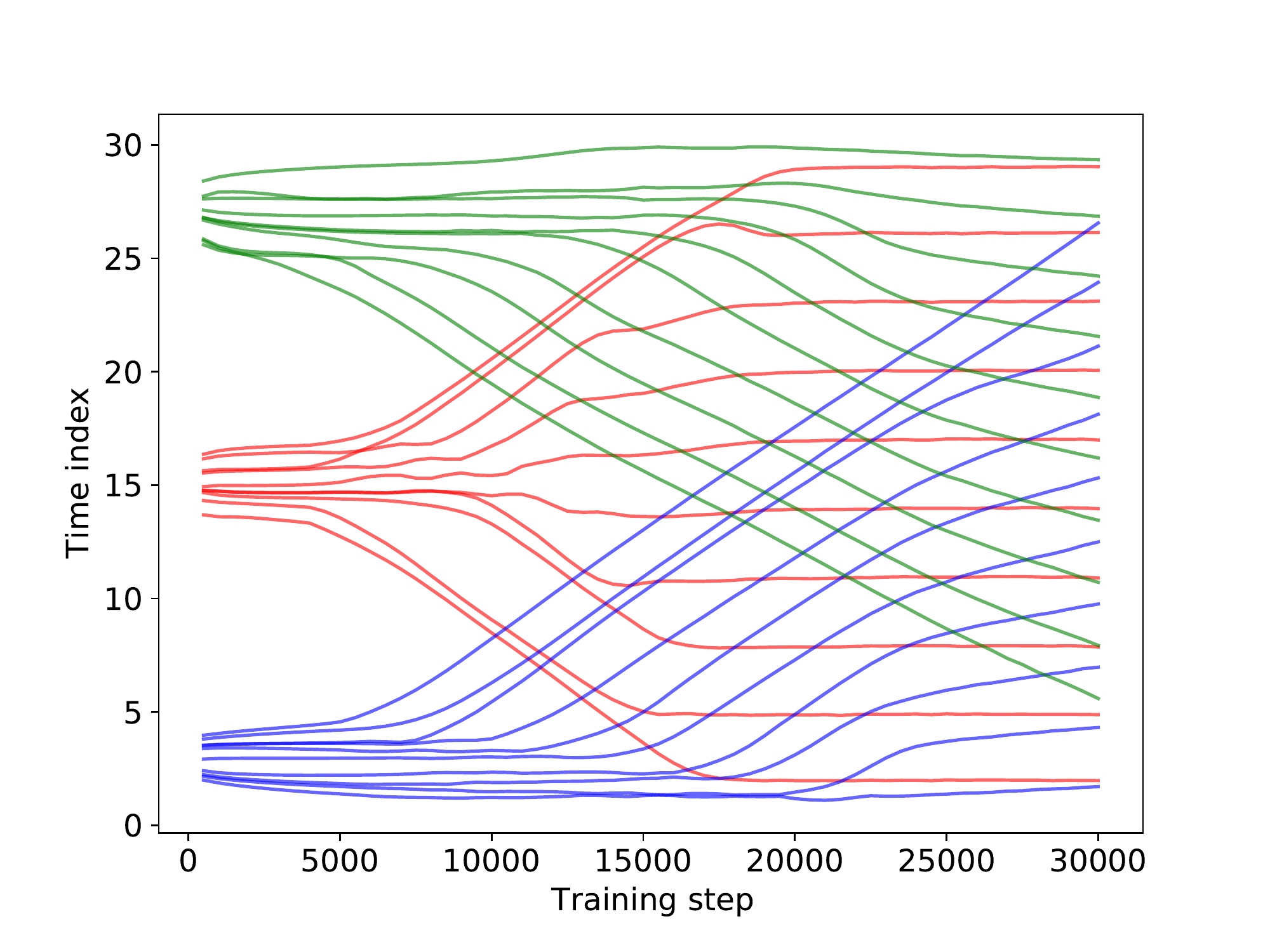}
    \caption{Optimized inducing points of our SVGP-VAE model during training on the moving ball data for three different (suboptimal) initializations. We can see that the model correctly learns to spread the inducing points evenly over the time series, which should be expected as a stationary GP kernel is used in the data generating process.}
    \label{fig:ball_inducing_points}
\end{figure}

To benchmark our model against existing scalable GP-VAE approaches, we follow the experimental setup from \citet{Casale2018GaussianAutoencoders} and use rotated MNIST digits \citep{lecun1998gradient} in a conditional generation task.
The task is to condition on a number of digits that have been rotated at different angles and to generate an image of one of these digits rotated at an unseen angle. In the original work, they consider 400 images of the digit 3, each rotated at multiple angles in $[0, 2\pi].$ Using identical architectures, kernel, and dataset ($N = 4050$), we report results
for both the GP-VAE of \citet{Casale2018GaussianAutoencoders} and our SVGP-VAE.
The full GP-VAE model from \citet{Pearce2019ThePixels} cannot be applied to this size of data, hence it is omitted.
As alternative baselines, we report results for a conditional VAE (CVAE) \citep{sohn2015learning} as well as for an extension of a sparse GP (SVIGP) approach from \cite{Hensman2013GaussianData}. We use the GECO algorithm \citep{Rezende2018TamingVAEs} to train our SVGP-VAE model, which greatly improves the stability of the training procedure.

\begin{table*}
\setlength{\tabcolsep}{10pt}
\centering
\caption{Results on the rotated MNIST digit 3 dataset. Reported here are mean values together with standard deviations based on 5 runs. We see that our proposed model performs comparably to the sparse GP baseline from \citet{Hensman2013GaussianData} and outperforms the VAE baselines while still being more scalable than the \citet{Casale2018GaussianAutoencoders} model.}
\begin{tabular}{l l l l}
\toprule
 & \textbf{MSE} & \textbf{GP complexity}   & \textbf{Time/epoch [s]}\\
\midrule
\textbf{CVAE} \citep{sohn2015learning} & $0.0796 \pm 0.0023$  & - & $0.39 \pm 0.01$ \\[0.08cm]
  \textbf{GPPVAE} \citep{Casale2018GaussianAutoencoders} & $0.0370 \pm 0.0012$ & $\bigO(NH^2)$ & $19.10 \pm 0.66$\\[0.08cm]
\textbf{SVGP-VAE} (ours) & $0.0251 \pm 0.0005
$ &  $\bigO(bm^2 + m^3)$ & $1.90 \pm 0.02$ \\[0.08cm]
\textbf{Deep SVIGP} \citep{Hensman2013GaussianData} & $0.0233 \pm 0.0014
$ &  $\bigO(bm^2 + m^3)$ & $1.15 \pm 0.04$ \\[0.08cm]
\bottomrule
\end{tabular}
\label{table:rot_MNIST_main}
\end{table*}

\paragraph{Performance of conditional generation.}
We see in Table~\ref{table:rot_MNIST_main} that our proposed model outperforms the VAE baselines in terms of MSE, while still being computationally more efficient than the model from \cite{Casale2018GaussianAutoencoders} (in theory and practice).\footnote{Note that in their paper, \citet{Casale2018GaussianAutoencoders} report a performance of 0.028 on this task. However, their code for the MNIST experiment is not openly available and we could not reproduce this result with our reimplementation (which is also available at \url{https://github.com/ratschlab/SVGP-VAE}).}
This can also be seen visually in Figure~\ref{fig:mnist_images} as our model produces the most faithful generations.
For the SVGP-VAE, the number of inducing points was set to $m=32$ and the batch size was set to $b = 256$. For the GP-VAE \citep{Casale2018GaussianAutoencoders}, the low-rank matrix factor $H$ depends on the dimension of the linear kernel $M$ used in their model ($M = 8$ and $H= 128$).

Moreover, our SVGP-VAE model comes close in performance to the unamortized sparse GP model with deep likelihood from \cite{Hensman2013GaussianData}.
This shows that the amortization gap of our model is small \citep{cremer2018inference}.
Note that this baseline was not considered in the previous GP-VAE literature \citep{Casale2018GaussianAutoencoders}, even though for the task of conditional generation, where we try to learn a single GP over the entire dataset, amortization is not strictly needed.
However, in tasks where the inference has to be amortized across several GPs, this model could not be used.
More details on this baseline are provided in Appendix \ref{sec:app_Hensman_baseline}.

\begin{figure}
    \centering
    \includegraphics[width=1.0\linewidth]{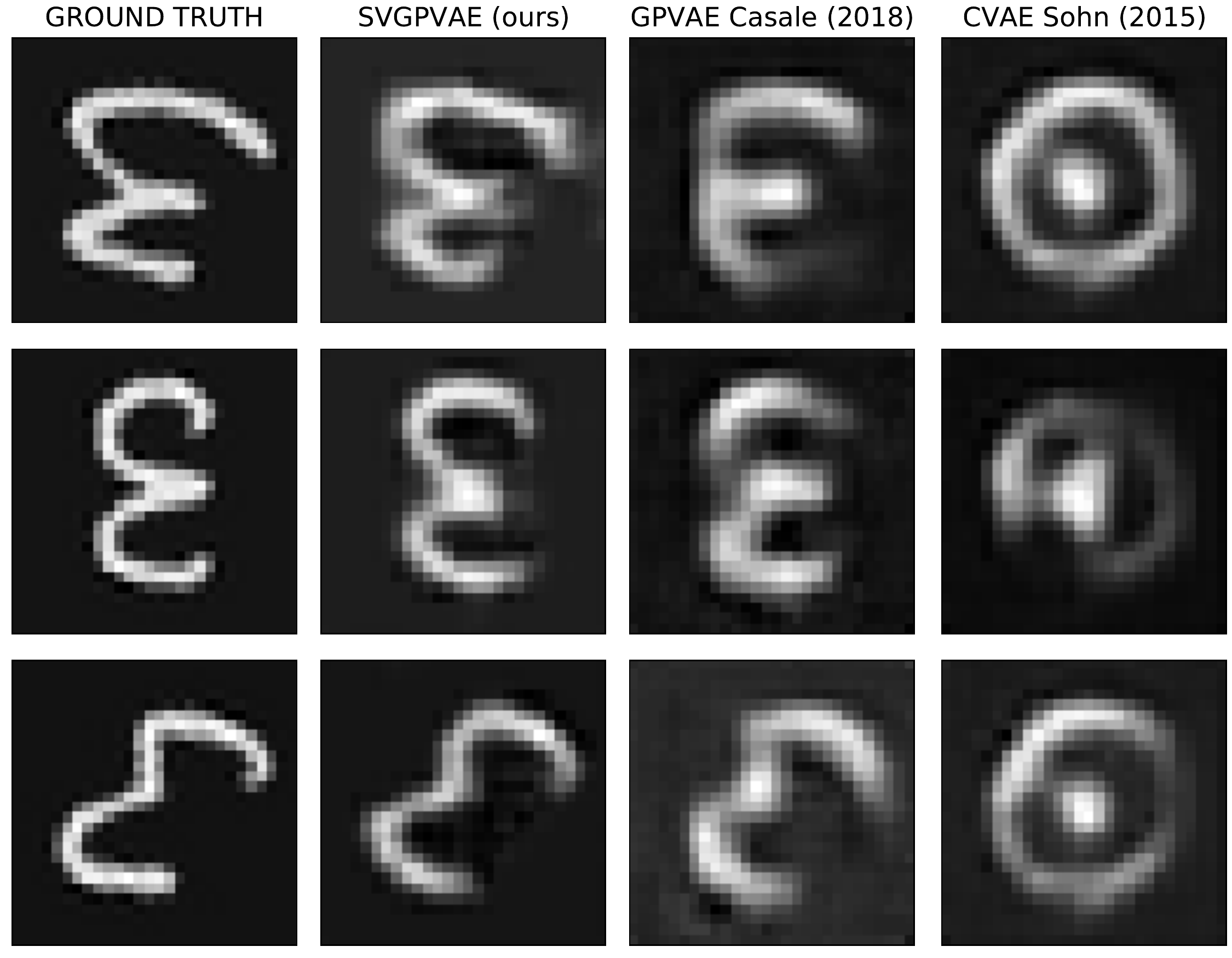}
    \caption{Conditionally generated rotated MNIST images. The generations of our proposed model are qualitatively more faithful to the ground truth. For more examples see Appendix C.3.}
    \label{fig:mnist_images}
\end{figure}

\begin{figure}
    \centering
    \includegraphics[width=1.0\linewidth]{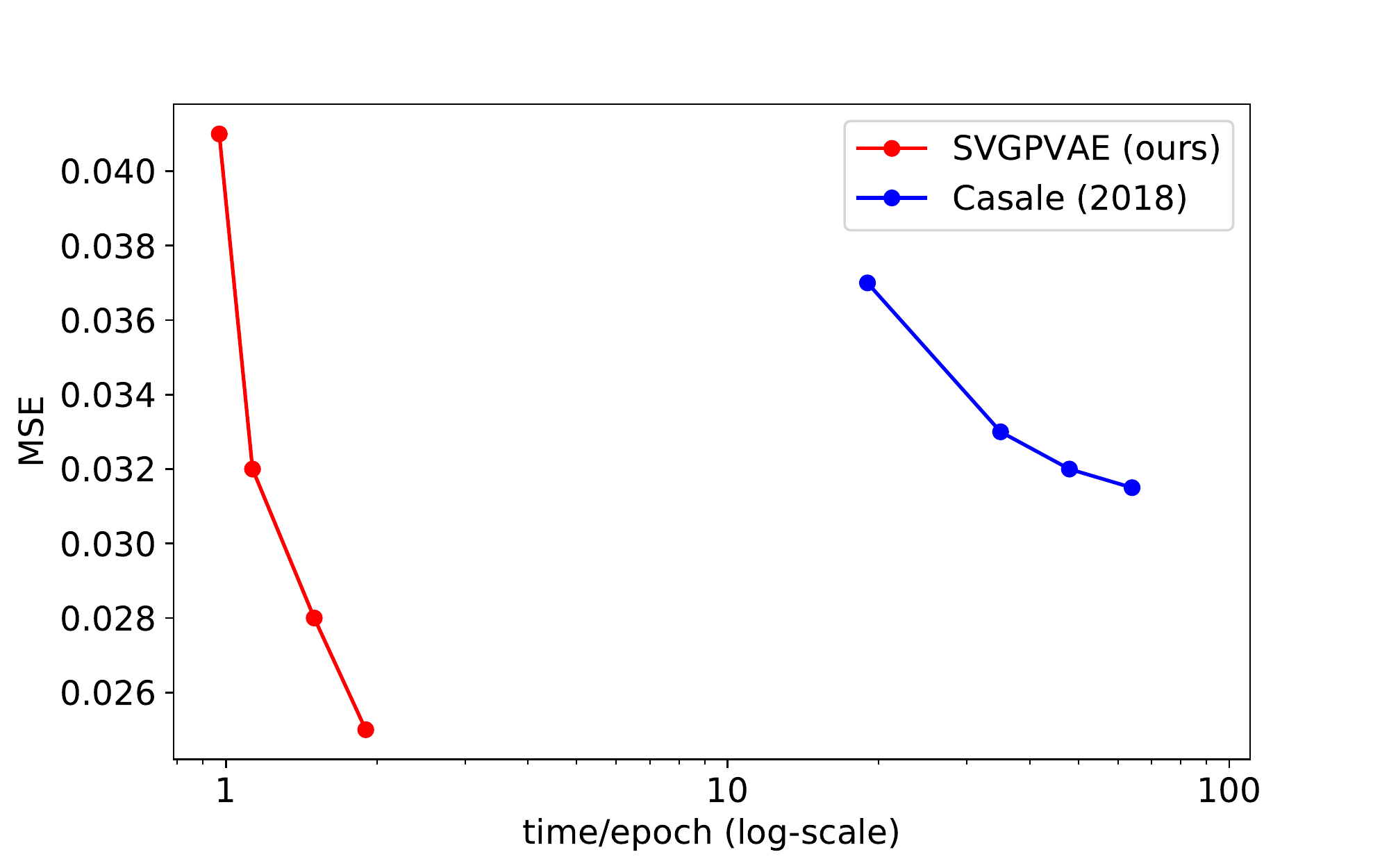}
    \caption{Performance of our proposed model with different numbers of inducing points and the \citet{Casale2018GaussianAutoencoders} model with different kernel dimensionalities as a function of runtime. For the SVGP-VAE, we consider four different configurations of inducing points, while for the \citet{Casale2018GaussianAutoencoders} model, we use four different dimensions of the linear kernel:  $m, M \in \{8, 16, 24, 32\}$.  }
    \label{fig:mnist_tradeoff}
\end{figure}

\paragraph{Tradeoff between runtime and performance.}
The performance of our sparse approximation can be increased by choosing a larger number of inducing points, at a quadratic cost in terms of runtime.
The \citet{Casale2018GaussianAutoencoders} model, while being more restricted in its kernel choice, offers a similar tradeoff between runtime and performance by choosing a different dimensionality for the low-rank linear kernel used in their latent space (see Appendix B.3). %
In Figure \ref{fig:mnist_tradeoff} we depict performance for both models when varying the number of inducing points and the dimension of the linear kernel, respectively. We observe that SVGP-VAE, besides being much faster, exhibits a steeper decline in the MSE as the model's capacity is increased.

\begin{figure}
    \centering
    \includegraphics[width=1.0\linewidth]{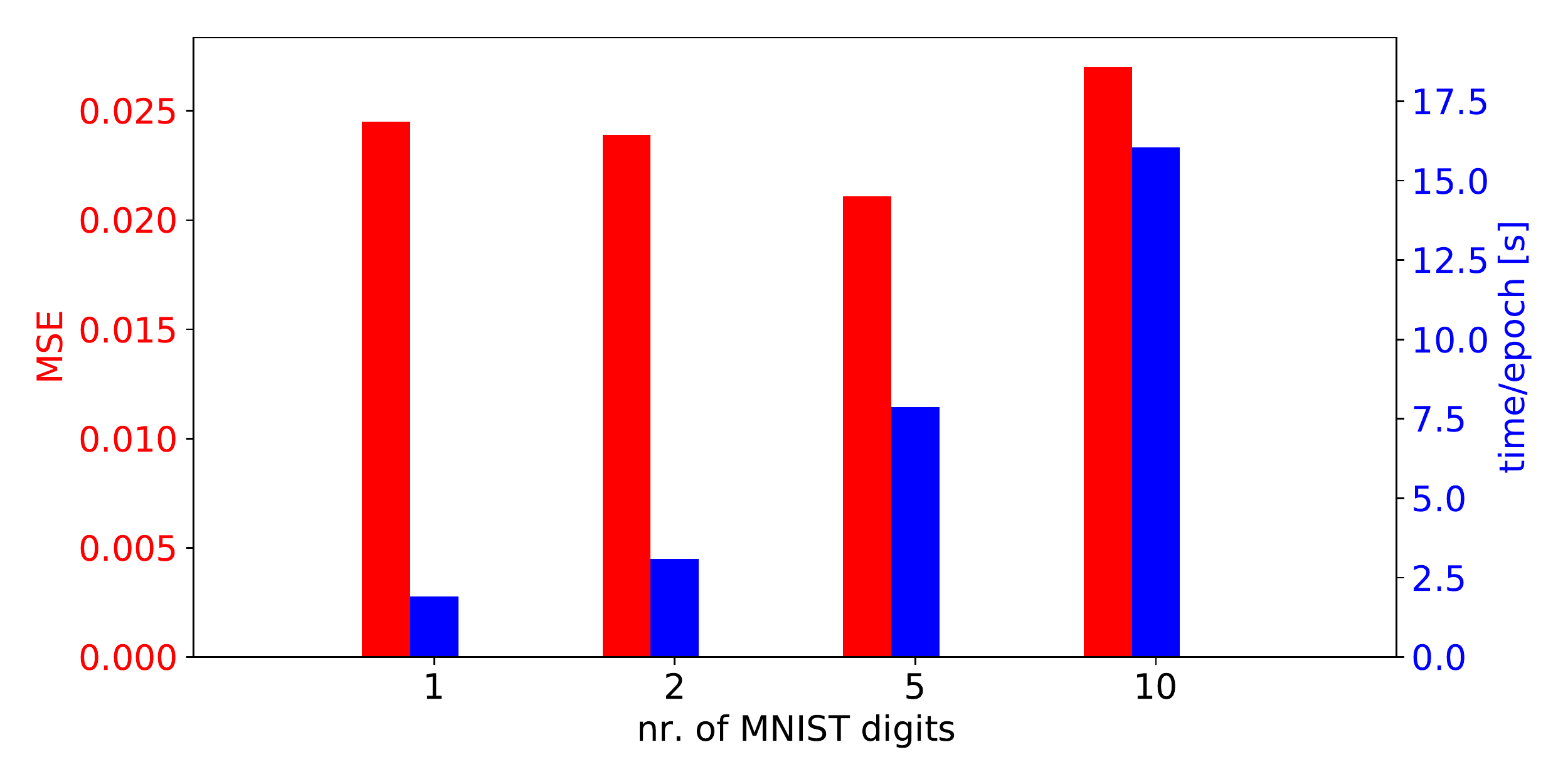}
    \caption{Performance and runtime of our proposed model on differently sized subsets of the MNIST dataset, including the full set. We see that the performance stays roughly the same, regardless of dataset size, while the runtime grows linearly as expected. The size of each dataset equals $4050 \times \textrm{nr. of MNIST digits}$.}
    \label{fig:mnist_scaling}
\end{figure}

\paragraph{Scaling to larger data.}
As mentioned above, \citet{Casale2018GaussianAutoencoders} restrict their experiment to a small subset of the MNIST dataset and indeed we did also not manage to scale their model to the whole dataset on our hardware (11~GB GPU memory).
Our SVGP-VAE, however, is easily scalable to such dataset sizes.
We report its performance on larger subsets of MNIST (including the full dataset) in Figure~\ref{fig:mnist_scaling}.
We see that the performance of our proposed model does not deteriorate with increased dataset size, while the runtime grows linearly as expected.
All in all, we thus see that our model is more flexible than the previous GP-VAE approaches, scales to larger datasets, and achieves a better performance at lower computational cost.

\subsection{SPRITES experiment}
\label{sec:SPRITES}
We additionally assessed the performance of our model on the SPRITES dataset \citep{li2018disentangled}. It consists of images of cartoon characters in different actions/poses. Each character has a unique style (skin color, tops, pants, hairstyle). There are in total 1296 characters, each observed in 72 different poses. For training, we use 1000 characters and we randomly sample 50 poses for each ($N = 50,000$). Auxiliary data for each image frame consists of a character style and a specific pose. The task is to conditionally generate characters not seen during training in different poses.

For the pose part of the auxiliary data, we use a GP-LVM \citep{Lawrence2004GaussianData}, similar to what was done in the rotated MNIST experiment for the digit style. Using the GP-LVM also for the character style would not allow us to extrapolate to new character styles during the test phase. To overcome this, we introduce a \textit{representation network}, with which we learn the unobserved parts of the auxiliary data in an amortized way.

Our model easily scales to the size of the SPRITES dataset (time per training epoch: $51.8 \pm 0.8$ seconds). Moreover, on the test set of 296 characters, our SVGP-VAE achieves a solid performance of $0.0079 \pm 0.0009$ pixel-wise MSE. In Figure \ref{fig:SPRITES}, we depict some generations for two test characters. We observe that model faithfully generates the pose information. However, it sometimes wrongly generates parts of the character style. We attribute this to the additional complexity of trying to amortize the learning of the auxiliary data. Extending our initial attempt of using the representation network for such purposes, together with more extensive benchmarking of our model performance, is left for future work. More details on the SPRITES experiment are provided in Appendix \ref{sec:app_SPRITES}.
\begin{figure}
    \centering
    \includegraphics[width=1.0\linewidth]{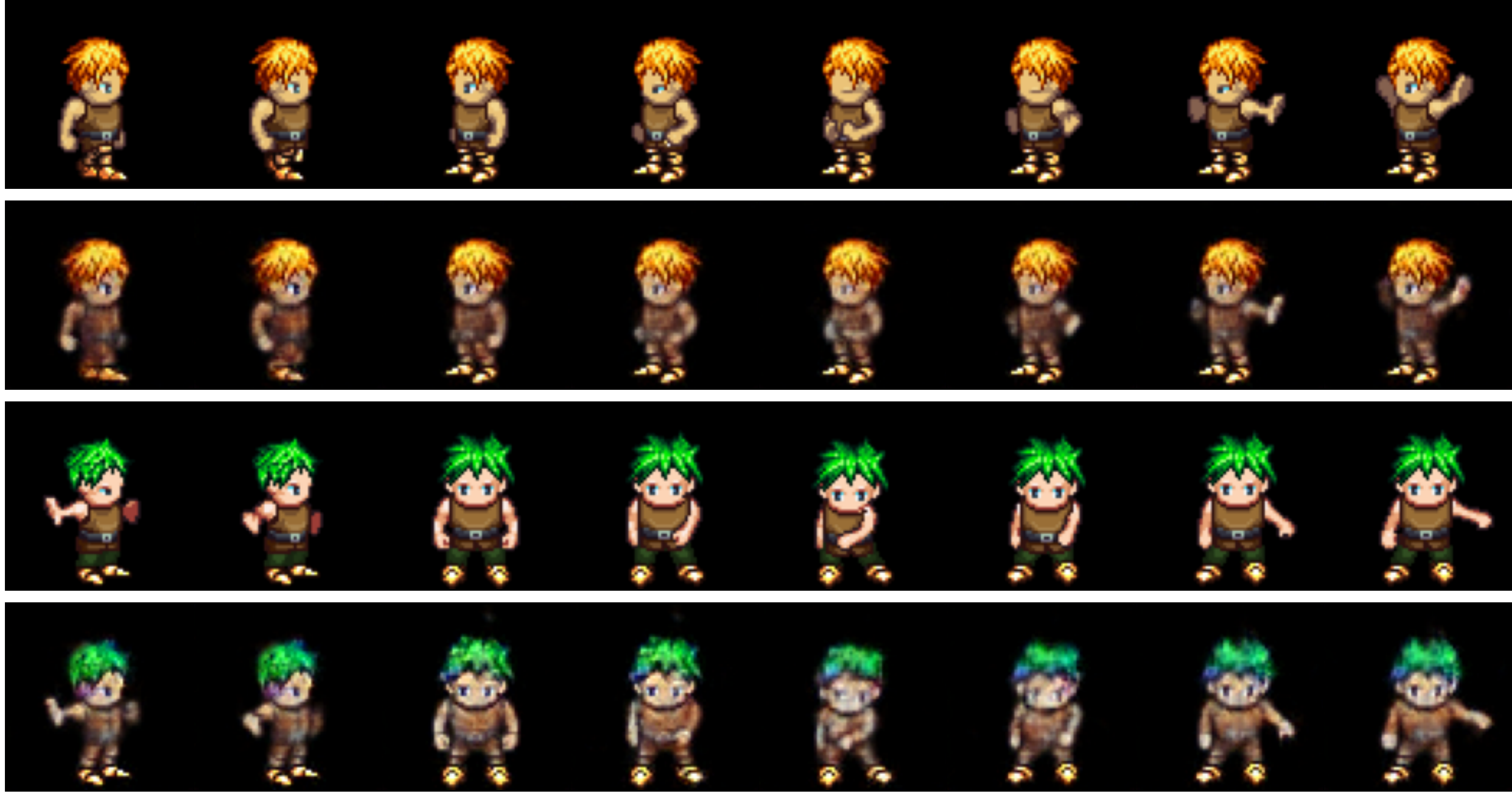}
    \caption{Conditionally generated SPRITES images for characters not observed during training. Images in the respective upper row are the ground truths, while the images in the respective lower row are conditional generations using our model.}
    \label{fig:SPRITES}
\end{figure}

\section{Conclusion}

We have proposed a novel sparse inference method for GP-VAE models and have shown theoretically and empirically that it is more scalable than existing approaches, while achieving competitive performance. Our approach bridges the gap between sparse variational GP approximations and GP-VAE models, thus enabling the utilization of a large body of work in the sparse GP literature. As such, it represents an important step towards unlocking the possibility to perform amortized GP regression on large datasets with complex likelihoods (e.g., natural images).

Fruitful avenues for future work include considering even more recently proposed sparse GP approaches \citep{Cheng2017VariationalComplexity, evans2020quadruply} and comparing our proposed scalable GP-VAE solution against other families of deep generative models \citep{ mirza2014conditional, Eslami1204}. This would help identify real-world applications where GP-VAEs could be most impactful.

\subsubsection*{Acknowledgements}
M.J. acknowledges funding from the Public Scholarship and Development Fund of the Republic of Slovenia.
V.F. was supported by a PhD fellowship from the Swiss Data Science Center and by the grant \#2017-110 of the Strategic Focus Area ``Personalized Health
and Related Technologies (PHRT)'' of the ETH Domain.
M.J. and V.F. were also supported by ETH core funding (to G.R.).
S.M. is supported by the Defense Advanced Research Projects Agency (DARPA) under Contract No. HR001120C0021. Any opinions, findings and conclusions or recommendations expressed in this material are those of the author(s) and do not necessarily reflect the views of the Defense Advanced Research Projects Agency (DARPA). Furthermore, S.M. was supported by the National Science Foundation under Grants 1928718, 2003237 and 2007719, and by Qualcomm.

\newpage

\bibliographystyle{abbrvnat}
\bibliography{references}

\onecolumn

\appendix
\counterwithin{figure}{section}
\counterwithin{table}{section}

\section{Experimental details}
\label{sec:exp-details}

Here we report the details and parameter settings for our experiments to foster reproducibility.

\subsection{Moving ball experiment}
For the moving ball experiment described in Section \ref{sec:exp_ball}, we use the same neural networks architectures and training setting as in \cite{Pearce2019ThePixels}.

\begin{table}[h!]
    \centering
    \caption{Parameter settings for the moving ball experiment.}
    \begin{tabular}{lc}
    \toprule
        Parameter & Value \\
        \midrule
         Nr. of feedforward layers in inference network & 2  \\
         Nr. of feedforward layers in generative network & 2 \\
         Width of a hidden feedforward layer & 500 \\
         Dimensionality of latent space (L) & 2\\
         Activation function & \emph{tanh} \\
         Learning rate & 0.001 \\
         Optimizer & Adam \\
         Nr. of epochs & 25000 \\
         Nr. of frames in each video (N) & 30 \\
         Dimension of each frame & $32 \times 32$ \\
         \bottomrule
    \end{tabular}
    \label{tab:ball_parameters}
\end{table}
A squared-exponential GP kernel with length scale $l=2$ was used. For the exact data generation procedure, we refer to \cite{Pearce2019ThePixels}. During training, 35 videos were generated in each epoch. The test MSE is reported on a held-out set of 350 videos. For the Adam optimizer \citep{Kingma2014Adam:Optimization}, the default Tensforflow parameters are used.

\subsection{MNIST experiment}
\label{sec:app_rot_MNIST}

For the rotated MNIST experiment described in Section \ref{sec:MNIST}, we used the same neural networks architectures as in \cite{Casale2018GaussianAutoencoders}: three convolutional layers followed by a fully connected layer in the inference network  and vice-versa in the generative network.

\begin{table}[h!]
    \centering
    \caption{Neural networks architectures for the MNIST experiment.}
    \begin{tabular}{lc}
    \toprule
        Parameter & Value \\
        \midrule
         Nr. of CNN layers in inference network &  3 \\
         Nr. of CNN layers in generative network &  3 \\
         Nr. of filters per CNN layer & 8 \\
         Filter size & $3 \times 3$ \\
         Nr. of feedforward layers in inference network & 1 \\
         Nr. of feedforward layers in generative network & 1 \\
         Activation function in CNN layers & ELU \\
         Dimensionality of latent space (L) & 16 \\
         \bottomrule
    \end{tabular}
    \label{tab:mnist_parameters}
\end{table}
The SVGP-VAE model is trained for 1000 epochs with a batch size of 256. The Adam optimizer \citep{Kingma2014Adam:Optimization} is used with its default parameters and a learning rate of 0.001. Moreover, the GECO algorithm \citep{Rezende2018TamingVAEs} was used for training our SVGP-VAE model in this experiment. The reconstruction parameter in GECO was set to $\kappa = 0.020$ in all reported experiments. 

For the GP-VAE model from \cite{Casale2018GaussianAutoencoders}, we used the same training procedure as reported in \cite{Casale2018GaussianAutoencoders}. We have observed in our reimplementation that a joint optimization at the end does not improve performance. Hence, we report results for the regime where the VAE parameters are optimized for the first 100 epochs, followed by 100 epochs during which the GP parameters are optimized. Moreover, we could not get their proposed low-memory modified forward pass to work, so in our reimplementation the entire dataset is loaded into the memory at one point during the forward pass. Our reimplementation of the GP-VAE model from \cite{Casale2018GaussianAutoencoders} is publicly available at \url{https://github.com/ratschlab/SVGP-VAE}.

For both models, the GP kernel proposed in \cite{Casale2018GaussianAutoencoders} is used. For more details on the kernel, we refer to Appendix \ref{sec:app_low_rank}. Note that the auxiliary data $\Xb$ is only partially observed in this experiment --- for both models we use a GP-LVM to learn the missing parts of $\Xb$. For both models, we use Principal Component Analysis (PCA) to initialize the GP-LVM vectors, as it was observed to lead to a slight increase in performance. PCA is also used in SVGP-VAE to initialize the inducing points. For more details see Appendix \ref{app:PCA_init}.

\subsection{SPRITES experiment}
\label{sec:app_SPRITES}

For the SPRITES experiment described in Section \ref{sec:SPRITES}, we used similar neural networks architectures as for the rotated MNIST experiment. Details are provided in Table \ref{tab:SPRITES_parameters}.

\begin{table}[h!]
    \centering
    \caption{Neural networks architectures for the SPRITES experiment.}
    \begin{tabular}{lc}
    \toprule
        Parameter & Value \\
        \midrule
         Nr. of CNN layers in inference network &  6 \\
         Nr. of CNN layers in generative network &  6 \\
         Nr. of filters per CNN layer & 16 \\
         Filter size & $3 \times 3$ \\
         Nr. of feedforward layers in inference network & 1 \\
         Nr. of feedforward layers in generative network & 1 \\
         Activation function in CNN layers & ELU \\
         Dimensionality of latent space (L) & 64 \\
         \bottomrule
    \end{tabular}
    \label{tab:SPRITES_parameters}
\end{table}
The SVGP-VAE model is trained for 50 epochs with a batch size of 500. The Adam optimizer \citep{Kingma2014Adam:Optimization} is used with its default parameters and a learning rate of 0.001. Moreover, the GECO algorithm \citep{Rezende2018TamingVAEs} was used for training our SVGP-VAE model in this experiment. The reconstruction parameter in GECO was set to $\kappa = 0.0075$.

The auxiliary data $\Xb$ is fully unobserved in this experiment. Recall that in SPRITES, the auxiliary data has two parts $\Xb = [\Xb_{s}, \: \Xb_{a}]$, with $\Xb_s \in \R^{N \times p_1}$ containing information about the character \textbf{s}tyle and $\Xb_a \in \R^{N \times p_2}$ containing information about the specific \textbf{a}ction/pose. Let $\xb_i = [\xb_{s,i}  \: \xb_{a,i}]$ denote auxiliary data for the $i$-th image (corresponding to the $i$-th row of the $\Xb$ matrix). A product kernel between two linear kernels is used:\footnote{$\delta_{ij} = 1$ if $i = j$ and $0$ else.}
\begin{align*}
    k_{\theta}(\xb_i, \xb_j) = \frac{\xb_{s,i}^T \xb_{s,j}}{\norm{\xb_{s,i}}\norm{\xb_{s,j}}} \cdot \frac{\xb_{a,i}^T \xb_{a,j}}{\norm{\xb_{a,i}}\norm{\xb_{a,j}}} + \sigma^2 \cdot \delta_{ij}  \; .
\end{align*}
The kernel normalization and the addition of the diagonal noise are used to improve the numerical stability of kernel matrices.

To learn the action part of the auxiliary data $\Xb_a$, we rely on a GP-LVM \citep{Lawrence2004GaussianData}, that is, we try to directly learn the matrix $\bm{A} \in \R^{72 \times p_2}$ consisting of GP-LVM vectors that each represent a specific action/pose. Since we want to extrapolate to new characters during the test phase\footnote{Note that an easier version of the SPRITES experiment would be to generate actions for characters already seen during the training phase. Such a conditional generation task would closely resemble the one from the face experiment in \cite{Casale2018GaussianAutoencoders}.}, the GP-LVM approach can not be used to learn the part of the auxiliary data that captures the character style information $\Xb_s$. This would require rerunning the optimization at test time to obtain a corresponding GP-LVM vector for the new, previously unseen style. To get around this, we introduce the \emph{representation network} $r_{\zeta}: \R^K \to \R^{p_1}$, similar to what is done in \cite{AliEslami2018NeuralRendering}, with which we aim to amortize the learning of the unobserved parts of the auxiliary data. Specifically, the representation for the $i$-th character style is then
\begin{align*}
    \mathbf{s}_i = f\big(r_{\zeta}(\yb_{1}), \: \dots, \: r_{\zeta}(\yb_{N_i})\big) \in \R^{p_1} \: , 
\end{align*}
where $\Yb_i = [\yb_1 \dots \yb_{N_i}]^T \in \R^{N_i \times K}$ represents all images of the $i$-th character, and $f$ is a chosen aggregation function (in our experiment we used the sum function). Instead of the GP-LVM vectors, the parameters of the representation network $\zeta$ are jointly optimized with the rest of the SVGP-VAE parameters. During training, we pass all 50 images (50 different actions) for each character through $r_{\zeta}$ to obtain the corresponding style representation. During the test phase, we first pass 36 actions through $r_{\zeta}$ and then use the resulting style representation vector to conditionally generate the remaining 36 actions. To help with the stability of training, we additionally pretrain the representation network on the classification task using the training data. Concretely, we train a classifier on top of the representations of the training data $r_{\zeta}(\yb_i), \: i=1, ..., N$. The (pretraining) label for each representation is a given character ID.\footnote{Recall that there are 1000 different characters in our training dataset, i.e., the pretraining task is a 1000-class classification problem.} 

The details on the architecture of the representation network are provided in Table \ref{tab:SPRITES_reprnn} (it is essentially a downsized inference network). 

\begin{table}[h!]
    \centering
    \caption{The architecture for the representation network $r_{\zeta}$ and some additional parameters in the SPRITES experiment.}
    \begin{tabular}{lc}
    \toprule
        Parameter & Value \\
        \midrule
         Nr. of CNN layers &  3 \\
         Nr. of filters per CNN layer & 16 \\
         Filter size & $2 \times 2$ \\
         Nr. of pooling layers & 1 \\
         Activation function in CNN layers & ELU \\
         Dimensionality of style representation ($p_1$) & 16 \\
         Dimensionality of action GP-LVM vectors ($p_2$) & 8
         \\
         Nr. of epochs for pretraining of $r_{\zeta}$ & 400 \\
         \bottomrule
    \end{tabular}
    \label{tab:SPRITES_reprnn}
\end{table}

\subsection{On the training of GP-VAE models (a practitioner's perspective)}
While working on implementations of different GP-VAE models, we have noticed that balancing the absolute magnitudes of the reconstruction and the KL-term is critical for achieving optimal results, even more so than in standard VAE models. In \cite{Fortuin2019GP-VAE:Imputation}, this was tackled by introducing a weighting $\beta$ parameter, whereas in \cite{Casale2018GaussianAutoencoders} a CV search on the noise parameter $\sigma_y^2$ of the likelihood $p_{\psi}(\yb_i | \zb_i)$ is performed. One downside of both solutions is that they introduce (yet) another  training  hyperparameter that needs to be manually tuned for every new dataset/model architecture considered. 

To get around this, we instead used the GECO algorithm \citep{Rezende2018TamingVAEs} to train our SVGP-VAE in the rotated MNIST experiment. Compared to the original GECO algorithm in \cite{Rezende2018TamingVAEs}, where the maximization objective is the KL divergence between a standard Gaussian prior and the variational distribution, the GECO maximization objective in the SVGP-VAE is composed of a cross-entropy term $\mathbb{E}_{q_S}[\log\Tilde{q}_{\phi}(\cdot)]$ and a sparse GP ELBO $\ELBO_{H}(\cdot)$. We have observed that GECO greatly simplifies training of GP-VAE models as it eliminates the need to manually tune the different magnitudes of the ELBO terms. Based on this, we would make a general recommendation for GECO to be used for training such models.

\newpage

\section{Supporting derivations}
\label{sec:app_supp_deriv}

\subsection{Vanishing of the inference network in GP-VAE with ELBO from \cite{Hensman2013GaussianData}}
\label{sec:app_vanishing_SVGP-VAE_Hensman}

In this section we show that working with the sparse GP approach presented in \cite{Hensman2013GaussianData} leads to vanishing of the inference network parameters $\phi$ in the GP-VAE model from \cite{Pearce2019ThePixels}. Recall that the sparse GP posterior from \cite{Hensman2013GaussianData} for the $l$-th latent channel has the form
\begin{align*}
    q_S^{l} (\zb^l_{1:N} | \cdot) =  \mathcal{N}\big(\zb^l_{1:N}| \Kb_{Nm} \Kb_{mm}^{-1} \bm{\mu}^l, \: \Kb_{NN} - \Kb_{Nm}\Kb_{mm}^{-1}\Kb_{mN} + \Kb_{Nm}\Kb_{mm}^{-1}\textbf{A}^l \Kb_{mm}^{-1} \Kb_{mN}\big) \: 
\end{align*}
with $\mub^l \in \R^m, \Ab^l \in \R^{m \times m}$ as free variational parameters, while the sparse GP ELBO for the $l$-th latent channel is given as 
\begin{multline*}
    \ELBO_{H}^{l}(\Ub, \bm{\mu}^l, \textbf{A}^l, \phi, \theta) = 
    \sum_{i=1}^N \bigg\{\log\mathcal{N}\big(\Tilde{y}_{l,i} \: | \: \bm{k}_i^T \Kb_{mm}^{-1}\bm{\mu}^l, \: \Tilde{\sigma}^{-2}_{l,i}\big) - \frac{1}{2 \Tilde{\sigma}^{-2}_{l,i}} \big( \Tilde{k}_{ii} + Tr(\textbf{A}^l\: \Lambda_i)\big) \bigg\} \\ - KL\big(q^{l}_S(\fb_m | \cdot) \: || \: p_{\theta}(\fb_m | \cdot)\big)
\end{multline*}
with $q^{l}_S(\fb_m | \cdot) = \mathcal{N}(\fb_m | \bm{\mu^l}, \bm{A^l})$ and $p_{\theta}(\fb_m | \cdot) = \mathcal{N}(\fb_m | \bm{0}, \: \Kb_{mm})$. 
 $\bm{k}_i$ represents the $i$-th column of $\Kb_{mN}$, $\Lambda_i := \Kb_{mm}^{-1}\bm{k}_i \bm{k}_i^T\Kb_{mm}^{-1}$ and $\Tilde{k}_{ii}$ is the $i$-th diagonal element of $\Kb_{NN} - \Kb_{Nm} \Kb_{mm}^{-1}\Kb_{mN}$. As mentioned in Section \ref{sec:methods}, $\ELBO_H^{l}$ depends on the inference network parameters $\phi$ through the (amortized) $l$-th latent dataset $\ybt_l=\mu^l_\phi(\Yb), \:\sigbt_l = \sigma^l_\phi(\Yb)$.
 
Note that the full sparse GP posterior equals $q_S(\Zb) = \prod_{l=1}^L q_S^{l} (\zb^l_{1:N} | \cdot) $. Similarly, the full sparse GP ELBO is $\ELBO_H = \sum_{l=1}^L \ELBO_{H}^{l}(\Ub, \bm{\mu}^l, \textbf{A}^l, \phi, \theta)$.

\vspace{10pt}
\begin{proposition}
For the $l$-th latent channel in the GP-VAE model with the bound from \cite{Hensman2013GaussianData}, the following relation holds:
\begin{align*}
    \mathbb{E}_{q_S^l}\big[\log\Tilde{q}_{\phi}(\zb^{\,l}_{1:N} | \Yb)\big] =
    \sum_{i=1}^N \bigg\{\log\mathcal{N}\big(\Tilde{y}_{l,i} \: | \: \bm{k}_i^T \Kb_{mm}^{-1}\bm{\mu}^l, \: \Tilde{\sigma}^{-2}_{l,i}\big) - \frac{1}{2 \Tilde{\sigma}^{-2}_{l,i}} \big( \Tilde{k}_{ii} + Tr(\textbf{A}^l\: \Lambda_i)\big) \bigg\} \: .
\end{align*}
\end{proposition}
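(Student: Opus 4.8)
The plan is to compute the left-hand side directly by expanding the Gaussian density $\Tilde{q}_{\phi}(\zb^{\,l}_{1:N} \mid \Yb)$ and taking the expectation under $q_S^l$. Since the inference network gives a factorized Gaussian, $\Tilde{q}_{\phi}(\zb^{\,l}_{1:N} \mid \Yb) = \prod_{i=1}^N \mathcal{N}(z^l_i \mid \Tilde{y}_{l,i}, \Tilde{\sigma}^2_{l,i})$, so $\log \Tilde{q}_{\phi}(\zb^{\,l}_{1:N} \mid \Yb)$ is a sum over $i$ of terms of the form $-\tfrac12 \log(2\pi \Tilde{\sigma}^2_{l,i}) - \tfrac{1}{2\Tilde{\sigma}^2_{l,i}}(z^l_i - \Tilde{y}_{l,i})^2$. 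First I would push the expectation inside the sum, so that everything reduces to computing $\mathbb{E}_{q_S^l}\big[(z^l_i - \Tilde{y}_{l,i})^2\big]$ for each $i$.

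The key step is then the standard bias-variance decomposition of this second moment: $\mathbb{E}_{q_S^l}\big[(z^l_i - \Tilde{y}_{l,i})^2\big] = \mathrm{Var}_{q_S^l}(z^l_i) + \big(\mathbb{E}_{q_S^l}[z^l_i] - \Tilde{y}_{l,i}\big)^2$. Here I read off the marginal mean and variance of $z^l_i$ from the explicit form of $q_S^l(\zb^l_{1:N} \mid \cdot)$ stated just before the proposition: the mean is the $i$-th entry of $\Kb_{Nm}\Kb_{mm}^{-1}\bm{\mu}^l$, namely $\bm{k}_i^T \Kb_{mm}^{-1}\bm{\mu}^l$, and the variance is the $i$-th diagonal entry of $\Kb_{NN} - \Kb_{Nm}\Kb_{mm}^{-1}\Kb_{mN} + \Kb_{Nm}\Kb_{mm}^{-1}\Ab^l\Kb_{mm}^{-1}\Kb_{mN}$, which is $\Tilde{k}_{ii} + \bm{k}_i^T\Kb_{mm}^{-1}\Ab^l\Kb_{mm}^{-1}\bm{k}_i$. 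The latter trace-form identity $\bm{k}_i^T\Kb_{mm}^{-1}\Ab^l\Kb_{mm}^{-1}\bm{k}_i = Tr(\Ab^l \Lambda_i)$ follows immediately from the definition $\Lambda_i = \Kb_{mm}^{-1}\bm{k}_i\bm{k}_i^T\Kb_{mm}^{-1}$ and the cyclic property of the trace. Substituting all of this back, the $i$-th summand becomes exactly $\log\mathcal{N}\big(\Tilde{y}_{l,i} \mid \bm{k}_i^T\Kb_{mm}^{-1}\bm{\mu}^l, \Tilde{\sigma}^2_{l,i}\big) - \tfrac{1}{2\Tilde{\sigma}^2_{l,i}}(\Tilde{k}_{ii} + Tr(\Ab^l\Lambda_i))$, using once more the symmetry $\mathcal{N}(z \mid \mu, \sigma^2) = \mathcal{N}(\mu \mid z, \sigma^2)$ to identify the log-density term with the one appearing in the claim.

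I do not expect any real obstacle here — the statement is essentially the observation that the cross-entropy term $\mathbb{E}_{q_S^l}[\log\Tilde{q}_\phi]$ coincides term-by-term with the per-data-point sum in $\ELBO_H^l$, and the proof is a one-line Gaussian moment computation once the marginals of $q_S^l$ are written out. The only mild care needed is keeping the $\Tilde{\sigma}^{-2}$ versus $\Tilde{\sigma}^{2}$ notation consistent with the paper's conventions (the latent dataset uses heteroscedastic noise $\Tilde{\sigma}^2_{l,i}$, and the $\log\mathcal{N}$ term is evaluated with that variance), and noting that the constant $-\tfrac12\log(2\pi\Tilde{\sigma}^2_{l,i})$ terms are exactly absorbed into the $\log\mathcal{N}$ on the right-hand side. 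With this proposition in hand, subtracting it from $\ELBO_H^l$ shows the cross-entropy and the data-fit sum cancel, leaving only the KL term $-KL(q_S^l(\fb_m\mid\cdot)\,\|\,p_\theta(\fb_m\mid\cdot))$, which depends on $\phi$ nowhere — giving the claimed vanishing of the inference network in the Hensman-based GP-VAE ELBO.
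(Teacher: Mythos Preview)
Your proposal is correct and follows essentially the same approach as the paper: both compute the Gaussian cross-entropy $\mathbb{E}_{q_S^l}[\log\Tilde{q}_\phi]$ and identify the variance contribution with $\Tilde{k}_{ii} + Tr(\Ab^l\Lambda_i)$ via the cyclic trace property. The only cosmetic difference is that the paper writes the cross-entropy in full multivariate form (using $\Tilde{D}_l = \diag(\sigbt_l^2)$ and $\bm{B} = \Kb_{Nm}\Kb_{mm}^{-1}$) and then decomposes the resulting trace terms, whereas you exploit the diagonal factorization of $\Tilde{q}_\phi$ up front and work coordinate-wise with the bias--variance split; the underlying computation is identical.
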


\begin{proof}\label{prop:hensman_vanish_1}
For notational convenience, define $\Tilde{D}_l := \diag(\sigbt_l^2)$ and $\bm{B} := \Kb_{Nm}\Kb_{mm}^{-1}$. Also recall that $ \Tilde{q}_{\phi}(\zb^{l}_{1:N} | \Yb) = \mathcal{N}\big( \zb^{l}_{1:N} | \ybt_l, \: \Tilde{D}_l\big)$. Using the formula for the cross-entropy between two multivariate Gaussian distributions, we proceed as
\begin{align*}
    \mathbb{E}_{q_S^l}\big[\log\Tilde{q}_{\phi}(\zb^{l}_{1:N} | \Yb)\big] &= -\frac{N}{2}\log (2\pi) -\frac{1}{2} \log |\Tilde{D}_l| 
    - \frac{1}{2}\big(\ybt_l - \bm{B}\bm{\mu}^l\big)^T \Tilde{D}_l^{-1}\big(\ybt_l - \bm{B}\bm{\mu}^l\big) - \frac{1}{2} Tr\big(\Tilde{D}_l^{-1}(\Tilde{\Kb} + \bm{B}\bm{A}^l\bm{B}^T)\big) \\
    &= \log \mathcal{N}\big(\ybt_l | \bm{B} \bm{\mu^l}, \: \Tilde{D}_l \big) - \frac{1}{2} Tr(\Tilde{D}_l^{-1}\Tilde{\Kb}) - \frac{1}{2}Tr(\Tilde{D}_l^{-1}\bm{B}\bm{A}^l\bm{B}^T) \: .
\end{align*}
It remains to show that the last trace term equals $\sum_{i=1}^N \Tilde{\sigma}^{-2}_{l, i}Tr(\textbf{A}^l\: \Lambda_i)$, which follows from
\begin{align*}
    Tr(\Tilde{D}_l^{-1}\bm{B}\bm{A}^l\bm{B}^T) = Tr(\bm{A}^l\bm{B}^T\Tilde{D}_l^{-1}\bm{B}) =  Tr\big(\bm{A}^l\Kb_{mm}^{-1}\big(\sum_{i=1}^N \Tilde{\sigma}^{-2}_{l, i}\bm{k_i} \bm{k_i}^T\big)\Kb_{mm}^{-1}\big) = \sum_{i=1}^N \Tilde{\sigma}^{-2}_{l, i} Tr(\bm{A}^l\Lambda_i) \: .
\end{align*}
\end{proof}

\begin{proposition}
The GP-VAE ELBO with the bound from \cite{Hensman2013GaussianData} reduces to 
\begin{align*}
    \cL_{PH}(\Ub, \psi, \theta, \mub^{1:L}, \Ab^{1:L}) =
\sum_{i=1}^N \mathbb{E}_{q_S}  \bigg[  \log \ppsi(\yb_i | \zb_i) \bigg] - \sum_{l=1}^L KL\big(q_S^{l}(\fb_m|\cdot) \: || \: p_\theta^l(\fb_m|\cdot)\big)
\end{align*}
\end{proposition}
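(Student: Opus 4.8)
The plan is to start from the general GP-VAE ELBO of \cite{Pearce2019ThePixels} in the form $\cL_P(\cdot)$ given in Equation~\ref{eq:PearceELBO}, substitute the sparse posterior $q_S(\cdot)$ in place of $q(\cdot)$, and replace $\log Z_{\phi,\theta}^l(\cdot)$ by the Hensman bound $\cL_H^l(\cdot)$ from Equation~\ref{eq:ELBO_H}. This yields the preliminary expression
\begin{align*}
\cL_{PH}(\cdot) = \sum_{i=1}^N \mathbb{E}_{q_S}\big[\log\ppsi(\yb_i|\zb_i) - \log\tilde q_\phi(\zb_i|\yb_i)\big] + \sum_{l=1}^L \cL_H^l(\Ub,\mub^l,\Ab^l,\phi,\theta).
\end{align*}
The key observation is that the $-\log\tilde q_\phi$ term and the data-sum part of each $\cL_H^l$ exactly cancel. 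First I would note that, since $q_S$ factorizes over latent channels, $\sum_{i=1}^N \mathbb{E}_{q_S}[\log\tilde q_\phi(\zb_i|\yb_i)] = \sum_{l=1}^L \mathbb{E}_{q_S^l}[\log\tilde q_\phi(\zb_{1:N}^l|\Yb)]$ (using that $\tilde q_\phi$ also factorizes over channels and data points). Then I would invoke Proposition~\ref{prop:hensman_vanish_1} (the first proposition above), which identifies $\mathbb{E}_{q_S^l}[\log\tilde q_\phi(\zb_{1:N}^l|\Yb)]$ precisely with the bracketed data-sum appearing in $\cL_H^l$, namely $\sum_{i=1}^N\{\log\mathcal{N}(\tilde y_{l,i}|\bm k_i^T\Kb_{mm}^{-1}\bm\mu^l,\tilde\sigma_{l,i}^{-2}) - \tfrac{1}{2\tilde\sigma_{l,i}^{-2}}(\tilde k_{ii}+\mathrm{Tr}(\Ab^l\Lambda_i))\}$.

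Substituting this identity, the entire data-sum of $\cL_H^l$ is subtracted off against $\mathbb{E}_{q_S^l}[\log\tilde q_\phi]$, leaving only the $-\KL(q_S^l(\fb_m|\cdot)\,||\,p_\theta(\fb_m|\cdot))$ term from each $\cL_H^l$. What remains is
\begin{align*}
\cL_{PH}(\cdot) = \sum_{i=1}^N \mathbb{E}_{q_S}\big[\log\ppsi(\yb_i|\zb_i)\big] - \sum_{l=1}^L \KL\big(q_S^l(\fb_m|\cdot)\,||\,p_\theta^l(\fb_m|\cdot)\big),
\end{align*}
which is the claimed form. I would also remark that the decoder expectation term $\mathbb{E}_{q_S}[\log\ppsi(\yb_i|\zb_i)]$ still depends on $\phi$ only through the marginals of $q_S$, but those marginals depend on $\mub^l,\Ab^l$ which in the Hensman treatment are free parameters, not functions of the inference network — hence the observation in the main text that $\phi$ vanishes from the objective.

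The main obstacle is bookkeeping rather than conceptual depth: one must be careful that the cross-entropy term $\mathbb{E}_{q_S^l}[\log\tilde q_\phi]$ uses the \emph{marginal} of $q_S$ over all $N$ latent values $\zb_{1:N}^l$ (equivalently $\fb_r$ with $\Xb_r = \Xb$ in Equation~\ref{eq:svgp_approx_post}), so that the mean $\Kb_{Nm}\Kb_{mm}^{-1}\mub^l$ and covariance $\Kb_{NN}-\Kb_{Nm}\Kb_{mm}^{-1}\Kb_{mN}+\Kb_{Nm}\Kb_{mm}^{-1}\Ab^l\Kb_{mm}^{-1}\Kb_{mN}$ appearing in Proposition~\ref{prop:hensman_vanish_1} are the correct ones; and one must track the sign conventions so that the subtraction genuinely cancels. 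The only other point needing care is confirming that no stray dependence on $\phi$ survives — i.e., that $\tilde q_\phi$ enters $\cL_H^l$ only through $\ybt_l,\sigbt_l$ and that these occur nowhere else after the cancellation, which is immediate from inspecting Equation~\ref{eq:ELBO_H}. The rest is a direct substitution, so I would keep the proof to a few lines citing the preceding proposition.
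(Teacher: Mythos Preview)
Your proposal is correct and follows essentially the same route as the paper: start from the Pearce ELBO with $q_S$ and $\cL_H^l$ substituted, use the channel factorization to rewrite $\sum_i \mathbb{E}_{q_S}[\log\tilde q_\phi(\zb_i|\yb_i)]$ as $\sum_l \mathbb{E}_{q_S^l}[\log\tilde q_\phi(\zb_{1:N}^l|\Yb)]$, and then invoke the preceding proposition so that this cross-entropy term cancels the data-sum part of each $\cL_H^l$, leaving only the KL terms. The paper's proof is exactly this three-line calculation; your additional remarks on bookkeeping (marginals, signs, $\phi$-independence) are sound but already implicit in the paper's argument.
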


\begin{proof}
Using the above proposition, we have 
\begin{align*}
    & \mathbb{E}_{q_S}  \bigg[ \sum_{i=1}^N \log p_{\psi}(\yb_i | \zb_i) - \log \Tilde{q}_{\phi}(\zb_i | \yb_i)\bigg] + \sum_{l=1}^L \ELBO_{H}^{l}  \\
    &= \mathbb{E}_{q_S}  \bigg[ \sum_{i=1}^N \log p_{\psi}(\yb_i | \zb_i)\bigg] - \mathbb{E}_{q_S}  \bigg[ \sum_{l=1}^L\log \Tilde{q}_{\phi}(\zb^{l}_{1:N} | \Yb)\bigg] + \sum_{l=1}^L \ELBO_{H}^{l}  \\
    &= \mathbb{E}_{q_S}  \bigg[ \sum_{i=1}^N \log p_{\psi}(\yb_i | \zb_i)\bigg] - \sum_{l=1}^L\ \bigg(\mathbb{E}_{q_{S}^l}  \big[ \log \Tilde{q}_{\phi}(\zb^{l}_{1:N} | \Yb)\big] -  \ELBO_{H}^{l}\bigg)  \\
    &= \mathbb{E}_{q_S}  \bigg[ \sum_{i=1}^N \log p_{\psi}(\yb_i | \zb_i) \bigg] - \sum_{l=1}^L KL\big(q^{l}_S(\fb_m | \cdot) \: || \: p_{\theta}(\fb_m | \cdot)\big) \: .
\end{align*}
\end{proof}
Observe that in $\ELBO_{PH}(\cdot)$ all terms that include $\ybt_l$ or $\sigbt_l$ cancel out, hence such ELBO is independent of the inference network parameters $\phi$.

\subsection{Monte Carlo estimators in the SVGP-VAE}
\label{sec:app_MC_SVGP-VAE}

The idea behind the estimators used in $q_S$ in our SVGP-VAE is based on the work presented in \cite{evans2020quadruply}. The main insight is to rewrite the matrix operations as expectations with respect to the empirical distribution of the training data. Those expectations are then approximated with Monte Carlo estimators.

 Recall that the (amortized) latent dataset for the $l$-th channel is denoted by $\{\Xb, \ybt_l, \sigbt_l\}$, with $\ybt_l:=\mu^l_\phi(\Yb)$ and $\sigbt_l := \sigma^l_\phi(\Yb)$. For notational convenience, additionally denote $\Tilde{D}_l := \diag(\sigbt_l^2)$. First, observe that the matrix product $\Kb_{mN}\Tilde{D}_l^{-1}\Kb_{Nm}$ in $\bm{\Sigma}^l$ can be rewritten as a sum over data points $\sum_{i=1}^N B_i (\xb_i, \yb_i)$
with
\begin{align*}
        B_i(\xb_i, \yb_i) := \frac{1}{\Tilde{\sigma}^2_{l, i}} \begin{bmatrix}
k_{\theta}(\ub_1, \xb_i)k_{\theta}(\ub_1, \xb_i)  & \hdots & k_{\theta}(\ub_1, \xb_i)k_{\theta}(\ub_m, \xb_i) \\[0.6em]
\vdots & \ddots & \vdots\\[0.6em]
k_{\theta}(\ub_m, \xb_i)k_{\theta}(\ub_1, \xb_i) & \hdots & k_{\theta}(\ub_m, \xb_i)k_{\theta}(\ub_m, \xb_i) 
\end{bmatrix} \: .
\end{align*}
Let $\Bar{b}$ represent a set of indices of data points in the current batch with size $b$.
Moreover, define $\Kb_{bm} \in \R^{b \times m}, \: \Tilde{D}_{l, b} \in \R^{b \times b}, \: \ybt_b^l \in \R^b$ as the sub-sampled versions of $\Kb_{Nm} \in \R^{N \times m}, \: \Tilde{D}_{l} \in \R^{N \times N}$ and $\ybt_l \in \R^{N}$, respectively, consisting only of data points in $\bar{b}$. An (unbiased) Monte Carlo estimator for $\bm{\Sigma}^l$ is then derived as follows
\begin{align*}
    \bm{\Sigma}^l &= \Kb_{mm} + \Kb_{mN}\Tilde{D}_l^{-1}\Kb_{Nm} = 
    \Kb_{mm} + N\sum_{i=1}^N \frac{1}{N}B_i (\xb_i, \yb_i) = 
    \Kb_{mm} + N \cdot \mathbb{E}_{i \sim \{1, ..., N \} } \big[B_i(\xb_i, \yb_i)\big] \\ & \approx 
    \Kb_{mm} + \frac{N}{b}\sum_{i \in \Bar{b}} B_i(\xb_i, \yb_i) =
    \Kb_{mm} + \frac{N}{b}\Kb_{mb} \Tilde{D}_{l, b}^{-1} \Kb_{bm}=: \bm{\Sigma}^l_b \: .
\end{align*}
Additionally, define $\bm{c}_l :=  \Kb_{mN} \Tilde{D}_l^{-1} \ybt_l$ and proceed similarly as above
\begin{align*}
    \bm{c}_l =  \sum_{i=1}^n  b_i(\xb_i, \yb_i) = N \cdot \mathbb{E}_{i\sim \{1, ..., N \}}[b_i(\xb_i, \yb_i)] \approx  \frac{N}{b} \sum_{i \in \Bar{b}} b_i(\xb_i, \yb_i) = \frac{N}{b} \Kb_{mb} \Tilde{D}_{l, b}^{-1}\ybt_b^l=: \bm{c}^l_b \: ,
\end{align*}
where
\begin{align*}
    b_i(\xb_i, \yb_i)  := \frac{\Tilde{y}_{l,i}}{\Tilde{\sigma}^2_{l,i}}
 \begin{bmatrix}
k_{\theta}(\ub_1, \xb_i) \\[0.6em]
\vdots \\[0.6em]
k(\ub_m, \xb_i) 
\end{bmatrix} \: .
\end{align*}
The estimators for $\bm{\mu}^l_T$ and $\bm{A}^l_T$ are then obtained using a plug-in approach,
\begin{align*}
    \bm{\mu}^l_T = \Kb_{mm} (\bm{\Sigma}^{l})^{-1}\bm{c}_l \approx \Kb_{mm} (\bm{\Sigma}_b^{l})^{-1} \bm{c}^l_b =:  \bm{\mu}_b^l \: ,\\
    \bm{A}_T^l = \Kb_{mm} (\bm{\Sigma}^{l})^{-1} \Kb_{mm} \approx \Kb_{mm} (\bm{\Sigma}_b^{l})^{-1} \Kb_{mm} =: \bm{A}^l _b \: .
\end{align*}
Note that neither of the above estimators is unbiased, since both depend on the inverse $(\bm{\Sigma}_b^{l})^{-1}$. For the empirical investigation of the magnitude of the bias, see Appendix \ref{sec:app_bias}. However, $\bm{A}^l_b$ can be shown to be approximately (up to the first order Taylor approximation) unbiased.
\vspace{10pt}
\begin{proposition}For the estimator $\bm{A}^l_b$ in SVGP-VAE, it holds that
\begin{align*}
    \mathbb{E}[\bm{A}^l_b] - \bm{A}_T^l \approx 0 \: .
\end{align*}
\end{proposition}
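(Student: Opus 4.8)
The plan is to perform a first-order (delta-method) Taylor expansion of the matrix function $\bm{\Sigma} \mapsto \Kb_{mm}\bm{\Sigma}^{-1}\Kb_{mm}$ around the true value $\bm{\Sigma}^l$, exploiting the fact that $\bm{\Sigma}^l_b$ is an unbiased estimator of $\bm{\Sigma}^l$. Write $\bm{\Sigma}^l_b = \bm{\Sigma}^l + \bm{E}_b$, where $\bm{E}_b := \bm{\Sigma}^l_b - \bm{\Sigma}^l$ is a zero-mean random perturbation (by the unbiasedness established in Appendix B.2, $\mathbb{E}[\bm{E}_b] = \bm{0}$). Since $\bm{A}^l_b = \Kb_{mm}(\bm{\Sigma}^l_b)^{-1}\Kb_{mm} = \Kb_{mm}(\bm{\Sigma}^l + \bm{E}_b)^{-1}\Kb_{mm}$, the idea is to substitute the standard Neumann-type expansion $(\bm{\Sigma}^l + \bm{E}_b)^{-1} = (\bm{\Sigma}^l)^{-1} - (\bm{\Sigma}^l)^{-1}\bm{E}_b(\bm{\Sigma}^l)^{-1} + \bigO(\norm{\bm{E}_b}^2)$, valid whenever $\norm{\bm{E}_b}$ is small relative to the smallest singular value of $\bm{\Sigma}^l$ (which holds with high probability for large $b$ since $\bm{\Sigma}^l$ is positive definite and $\bm{E}_b \to \bm{0}$).

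Concretely, I would carry out the following steps. First, write
\begin{align*}
    \bm{A}^l_b &= \Kb_{mm}(\bm{\Sigma}^l)^{-1}\Kb_{mm} - \Kb_{mm}(\bm{\Sigma}^l)^{-1}\bm{E}_b(\bm{\Sigma}^l)^{-1}\Kb_{mm} + R_b \\
    &= \bm{A}^l_T - \Kb_{mm}(\bm{\Sigma}^l)^{-1}\bm{E}_b(\bm{\Sigma}^l)^{-1}\Kb_{mm} + R_b,
\end{align*}
where $R_b$ collects the second- and higher-order terms in $\bm{E}_b$. Second, take expectations on both sides: the linear term in $\bm{E}_b$ vanishes because $\mathbb{E}[\bm{E}_b] = \bm{0}$ and the surrounding matrices $\Kb_{mm}(\bm{\Sigma}^l)^{-1}$, $(\bm{\Sigma}^l)^{-1}\Kb_{mm}$ are deterministic, so $\mathbb{E}[\Kb_{mm}(\bm{\Sigma}^l)^{-1}\bm{E}_b(\bm{\Sigma}^l)^{-1}\Kb_{mm}] = \Kb_{mm}(\bm{\Sigma}^l)^{-1}\mathbb{E}[\bm{E}_b](\bm{\Sigma}^l)^{-1}\Kb_{mm} = \bm{0}$. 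This yields $\mathbb{E}[\bm{A}^l_b] - \bm{A}^l_T = \mathbb{E}[R_b]$, and discarding $\mathbb{E}[R_b]$ as the neglected higher-order term gives the claimed approximate identity $\mathbb{E}[\bm{A}^l_b] - \bm{A}^l_T \approx 0$.

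The main obstacle is making the "$\approx$" precise, i.e. controlling the remainder $\mathbb{E}[R_b]$. A clean treatment requires either (i) restricting attention to the first-order Taylor approximation as a formal device — which is exactly what the proposition statement signals by writing "approximately (up to the first order Taylor approximation)" — or (ii) supplying a genuine bound, e.g. $\mathbb{E}[R_b] = \bigO(\mathbb{E}\norm{\bm{E}_b}^2) = \bigO(1/b)$, which would need a moment bound on the Monte Carlo error $\bm{E}_b$ together with an argument that the expansion's convergence radius is not exceeded (handled by truncating on the event $\norm{(\bm{\Sigma}^l)^{-1}\bm{E}_b} < 1$ and bounding the complementary event's contribution). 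Since the proposition only claims the first-order statement, I would present the delta-method computation above and remark that the neglected term is second order in the vanishing quantity $\bm{E}_b$, deferring the quantitative bias investigation to the empirical study in Appendix C.4.
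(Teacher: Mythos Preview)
Your argument is correct and is the standard delta-method computation: expand $(\bm{\Sigma}^l + \bm{E}_b)^{-1}$ to first order in the zero-mean perturbation $\bm{E}_b = \bm{\Sigma}^l_b - \bm{\Sigma}^l$, observe that the linear term has zero expectation because the surrounding factors are deterministic, and discard the quadratic remainder.

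The paper reaches the same conclusion but via a different expansion point. Rather than centering at $\bm{\Sigma}^l$, it factors both $(\bm{\Sigma}^l_b)^{-1}$ and $(\bm{\Sigma}^l)^{-1}$ as $\Kb_{mm}^{-1/2}(\Ib + X)^{-1}\Kb_{mm}^{-1/2}$ (using positive definiteness of $\Kb_{mm}$), applies $(\Ib+X)^{-1}\approx \Ib - X$ to each separately, and then notes that the resulting linear-in-data pieces agree in expectation because $\frac{N}{b}\Kb_{mb}\diag(\sigbtt_b)\Kb_{bm}$ is unbiased for $\Kb_{mN}\diag(\sigbtt)\Kb_{Nm}$. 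Your choice of center is more natural---the Monte Carlo error $\bm{E}_b$ is the genuinely small quantity that vanishes as $b\to N$---whereas in the paper's version the perturbation $X$ is the entire data-dependent contribution, which need not be small, so the validity of that Neumann step is less transparent (the two first-order errors are only shown to cancel, not to be individually negligible). On the other hand, the paper's route stays entirely in terms of kernel blocks already introduced and avoids instantiating $(\bm{\Sigma}^l)^{-1}$. Both arguments are formal to the same degree, and your remark that a rigorous version would control $\mathbb{E}[R_b]=\bigO(1/b)$ is the right caveat.
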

\begin{proof}
Note that expectation here is taken with respect to the empirical distribution of the training data, that is, $\mathbb{E}_{i \sim \{1,...,N\}}$. Using the definitions of $\bm{A}^l_b$ and $\bm{A}_T^l$, we get
\begin{align*}
    \mathbb{E}[\bm{A}^l_b] - \bm{A}_T^l = \Kb_{mm} \big(\mathbb{E}\big[(\bm{\Sigma}^l_b)^{-1}\big] - (\bm{\Sigma}^l)^{-1}\big) \Kb_{mm} \: ,
\end{align*}
so it remains to show that $\mathbb{E}[(\bm{\Sigma}^l_b)^{-1}] - (\bm{\Sigma}^l)^{-1} \approx 0$. To this end, we exploit the positive definiteness of the kernel matrix $\Kb_{mm}$ and we approximate both inverse terms with the first order Taylor expansion:
\begin{align*}
    (\bm{\Sigma}^l_b)^{-1} = \big(\Kb_{mm} + \frac{N}{b} \Kb_{mb} \Tilde{D}_{l, b}^{-1} \Kb_{bm}\big)^{-1} = \Kb_{mm}^{-\frac{1}{2}}\big(\Ib + \frac{N}{b}\Kb_{mm}^{-\frac{1}{2}}\Kb_{mb} \Tilde{D}_{l, b}^{-1} \Kb_{bm}\Kb_{mm}^{-\frac{1}{2}}\big)^{-1} \Kb_{mm}^{-\frac{1}{2}} \\ \approx \Kb_{mm}^{-\frac{1}{2}}\big(\Ib - \frac{N}{b}\Kb_{mm}^{-\frac{1}{2}}\Kb_{mb} \Tilde{D}_{l, b}^{-1} \Kb_{bm}\Kb_{mm}^{-\frac{1}{2}}\big) \Kb_{mm}^{-\frac{1}{2}} = \Kb_{mm}^{-1} - \frac{N}{b}\Kb_{mm}^{-1} \Kb_{mb} \Tilde{D}_{l, b}^{-1} \Kb_{bm}\Kb_{mm}^{-1} \: .
\end{align*}
Similarly, we have $(\bm{\Sigma}^l)^{-1} \approx \Kb_{mm}^{-1} - \Kb_{mm}^{-1}\Kb_{mN}\Tilde{D}_l^{-1}\Kb_{Nm}\Kb_{mm}^{-1}$ . Using this, we proceed as
\begin{align*}
    \mathbb{E}[(\bm{\Sigma}^l_b)^{-1}] - (\bm{\Sigma}^l)^{-1} & \approx  - \frac{N}{b}\Kb_{mm}^{-1} \mathbb{E} \big[ \Kb_{mb} \Tilde{D}_{l, b}^{-1} \Kb_{bm} \big]\Kb_{mm}^{-1} + \Kb_{mm}^{-1}\Kb_{mN}\Tilde{D}_l^{-1}\Kb_{Nm}\Kb_{mm}^{-1}  \\
    &= - \frac{N}{b}\Kb_{mm}^{-1} \mathbb{E} \bigg[ \sum_{i \in \Bar{b}} B_i(\xb_i, \yb_i) \bigg]\Kb_{mm}^{-1} + \Kb_{mm}^{-1}\bigg(\sum_{i=1}^N B_i (\xb_i, \yb_i)\bigg)\Kb_{mm}^{-1}  \\ &= - N\Kb_{mm}^{-1} \mathbb{E} \big[  B_i(\xb_i, \yb_i) \big]\Kb_{mm}^{-1} + N \Kb_{mm}^{-1}\mathbb{E} \big[  B_i(\xb_i, \yb_i) \big]\Kb_{mm}^{-1} = 0
\end{align*}
\vspace{1em}
\end{proof}
Note that a similar proof technique unfortunately cannot be used to show that $\bm{\mu}^l_b$ is approximately unbiased for $\bm{\mu}^l_T$, due to the product of two plug-in estimators that both depend on the data in the same batch.

\subsection{Low-rank kernel matrix in \cite{Casale2018GaussianAutoencoders}}
\label{sec:app_low_rank}
In the following, we present an approach from \cite{Casale2018GaussianAutoencoders} to reduce the cubic GP complexity in their GP-VAE model. Note that the exact approach is not given in \cite{Casale2018GaussianAutoencoders} and the derivation shown here is our best attempt at recreating the results.

In \cite{Casale2018GaussianAutoencoders}, datasets composed of $P$ unique objects observed in $Q$ unique views are considered, for instance, images of faces captured from different angles. In total, this amounts to $N = P \cdot Q$ images. The auxiliary data consist of two sets of features $\Xb = \big [\Xb_o \: \Xb_v \big]$, with $\Xb_o \in \R^{N \times p_1}$ containing information about objects (e.g., drawing style of the digit or characteristics of the face) and $\Xb_v \in \R^{N \times p_2}$ containing information about views (e.g., an angle or position in space). Let $\xb_i = [\xb_{o,i}  \: \xb_{v,i}]$ denote auxiliary data for the $i$-th image (corresponding to the $i$-th row of the $\Xb$ matrix). Additionally, denote by $\textbf{P} \in \R^{P \times p_1}$ and $\textbf{Q} \in \R^{Q \times p_2}$ matrices consisting of all unique object and view representations, respectively. A product kernel between a linear kernel for object information and a periodic kernel for view information is used:
\begin{align*}
    k_{\theta}(\xb_i, \xb_j) = \sigma^2 \exp\bigg(-\frac{2\sin ^2 \big(\norm{\xb_{v,i} - \xb_{v,j}}\big)}{ l^2}\bigg) \cdot \xb_{o,i}^T \xb_{o,j} \:, \; \theta = \{\sigma^2, l\} \: .
\end{align*}
Exploiting the product and (partial) linear structure of the kernel and using properties of the Kronecker product, $\Kb_{NN}$ can be written in a low-rank form as
\begin{align*}
    \Kb_{NN}(\Xb, \Xb) = \Pb \Pb^T \otimes \Kb(\textbf{Q}) = \Pb \Pb^T \otimes \Lb \Lb^T = 
    \big(\Pb \otimes \Lb\big) \big( \Pb^T \otimes \Lb^T\big) = \big(\Pb \otimes \Lb\big) \big( \Pb \otimes \Lb\big)^T =: \Vb \Vb^T,
\end{align*}
where $\Kb(\textbf{Q}) \in \R^{Q \times Q}$ is a kernel matrix of all unique view vectors based on the periodic kernel, $\Lb$ is its Cholesky decomposition and $\Vb \in \R^{N \times H}, \: H = Q \cdot p_{1},$ is the obtained low-rank matrix ($H \ll N$ due to the assumption that the number of unique views $Q$ is not large). For such matrices, the inverse and log-determinant can be computed in $O(NH^2)$ using a matrix inversion lemma \citep{henderson1981deriving} and a matrix determinant lemma  \citep{harville1998matrix}, respectively.

While the above approach elegantly reduces the GP complexity for a given dataset (for auxiliary data $\Xb$ with a product structure), it is not readily extensible for other types of datasets (e.g. time series). In contrast, our SVGP-VAE makes no assumptions on neither the data nor the GP kernel used. Therefore, it is a more general solution to scale GP-VAE models.

\subsection{Sparse GP-VAE based on \cite{Titsias2009VariationalProcesses}}
\label{sec:app_one_shot_SVGP-VAE}
Using the sparse GP posterior $q_S$ (Equation \ref{eq:svgp_approx_post}) and ELBO $\ELBO_T$ (Equation \ref{eq:ELBO_T}) from \cite{Titsias2009VariationalProcesses} gives rise to the following sparse GP-VAE ELBO:
\begin{align*}\label{eq:SVGP-VAE_Titsias_ELBO}
    \ELBO_{PT}\big(\Ub, \psi, \phi, \theta):= \sum_{l=1}^L \ELBO_{T}^{l}(\Ub, \, \phi, \, \theta)
  +\sum_{i=1}^N\E_{q_S}  \bigg[  \log \ppsi(\yb_i | \zb_i) - %
     \log \tilde{q}_\phi(\zb_i | \yb_i)\bigg] \: .
\end{align*}
In Section \ref{sec:latent_sparse_GP}, we have outlined how to obtain the above sparse ELBO from the GP-VAE ELBO proposed in \cite{Pearce2019ThePixels}. Alternatively, $\ELBO_{PT}$ can be derived in the standard way by directly considering the KL divergence between the sparse GP posterior and the (intractable) true posterior for the latent variables  $\KL\big(q_S(\Zb|\cdot)||p_{\psi, \theta}(\Zb|\Yb, \Xb)\big) $. 

Following \cite{Titsias2009VariationalProcesses}, we consider the joint distribution of observed and \emph{augmented} latent variables $p_{\psi, \theta}(\Zb, \Fb_m, \Yb | \Xb)$ where 
$\Fb_m := \big[ \fb^1, \dots, \fb^L \big], \; \fb^l := f^l(\Ub) \in  \mathbb{R}^{m}$.
The sparse GP posterior decomposes as $q_S(\Zb, \Fb_m | \cdot) = p_{\theta}(\Zb | \Fb_m) p_S(\Fb_m)$,
where $p_S(\Fb_m) := \prod_{l=1}^L \mathcal{N}(\fb^l_m | \mub^l, \Ab^l)$ is a free variational distribution and $p_{\theta}(\Zb | \Fb_m)$ is a (standard) conditional GP prior. The problem of minimizing the KL divergence is then equivalently posed as a maximization of a lower bound of the model evidence as follows,
where in the first steps we introduce $\Tilde{q}_{\phi}(\Zb | \Yb)$
and $q_S(\Zb, \Fb_m | \cdot)$ and apply Jensen's inequality:
\begin{align*}
    \log p(\Yb | \Xb) 
    & =   \log \int p_{\psi, \theta}(\Zb, \Fb_m, \Yb| \Xb)
    \frac{q_S(\Zb, \Fb_m | \cdot)}{q_S(\Zb, \Fb_m | \cdot)} 
    \frac{\Tilde{q}_{\phi}(\Zb | \Yb)}{\Tilde{q}_{\phi}(\Zb | \Yb)}
    d\Zb d\Fb_m
    \\
    \\
     &\ge   \int q_S(\Zb, \Fb_m | \cdot) \log 
    \frac{p_{\psi, \theta}(\Zb, \Fb_m, \Yb| \Xb)}{q_S(\Zb, \Fb_m | \cdot)} 
    \frac{\Tilde{q}_{\phi}(\Zb | \Yb)}{\Tilde{q}_{\phi}(\Zb | \Yb)}
    d\Zb d\Fb_m  
    \\ \\
    &=
      \int q_S(\Zb, \Fb_m | \cdot) \log \frac{\Tilde{q}_{\phi}(\Zb | \Yb)p_{ \psi}(\Yb| \Zb) p_{\theta}(\Zb | \Fb_m )p_{\theta}(\Fb_m | \Xb)}{\Tilde{q}_{\phi}(\Zb | \Yb)p_{\theta}(\Zb | \Fb_m) p_S(\Fb_m)} d\Zb d\Fb_m 
      \\ \\
     &=\sum_{l=1}^L \int q_S(\zb^l, \fb_m^l | \cdot) \log \frac{\Tilde{q}_{\phi}(\zb^l | \Yb) p_{\theta}(\fb_m^l | \Xb)}{ p_S(\fb_m^l)} d\zb^l d\fb_m^l 
     +
     \sum_{i=1}^N \int q_S(\zb_i | \cdot)\bigg( \log p_{ \psi}(\yb_i| \zb_i) - \log \Tilde{q}_{\phi}(\zb_i | \yb_i) \bigg)d\zb_i  \\ \\[5pt]
      &= \sum_{l=1}^L \mathcal{L}_{T}(\Ub, \phi, \theta, \mub^l, \Ab^l) + \sum_{i=1}^N \mathbb{E}_{q_S}\big[ \log p_{ \psi}(\yb_i| \zb_i) - \log \Tilde{q}_{\phi}(\zb_i | \yb_i) \big]
     \end{align*}
Recall the symmetry of the Gaussian distribution $\tilde{q}_\phi(\zb_i^l|\yb_i)=\mathcal{N}(\zb^l_i|\mub^l(\yb_i), \sigma^l(\yb_i)) = \mathcal{N}(\mub^l(\yb_i)|\zb^l_i, \sigma^l(\yb_i))$. Hence, %
the first term of the penultimate expression is a sum over sparse Gaussian processes, one for each latent channel, and each term is precisely Equation 8 of \citet{Titsias2009VariationalProcesses} for sparse Gaussian process regression.
Therefore we write $\mathcal{L}_T^l$ and let $\mub^l = \mub_T^l$ and $\Ab^l = \Ab^l_T$. For further derivation steps see \cite{Titsias2009VariationalProcesses}.

\newpage

\section{Additional experiments}
\label{sec:app_experiments_chapter}

\subsection{PCA initialization of GP-LVM vectors and inducing points}
\label{app:PCA_init}

In this section, we describe how Principal Component Analysis (PCA) is used to initialize the GP-LVM digit representations as well as the inducing points in the rotated MNIST experiment. Note that both the GP-VAE \citep{Casale2018GaussianAutoencoders} and the SVGP-VAE depend on GP-LVM vectors, with the SVGP-VAE additionally relying on inducing points.

To obtain a continuous digit representations for each digit instance, we start with the data matrix $\Xb \in \R^{P \times K}$ that consists of unrotated MNIST images. PCA is then performed on $\Xb$, yielding a matrix $\bm{D} \in \R^{P \times M}$ whose rows $\bm{d}_i$ are used as initial values for the GP-LVM vectors. $M$ represents the number of principal components kept.

For initialization of the inducing points, we sample $n$ GP-LVM vectors from the empirical distribution based on the PCA matrix $\bm{D}$ for each of the $Q$ angles. This results in a matrix $\Ub_{init} \in \R^{m \times (1 + M)}$ with $m = n \cdot Q$ representing the number of inducing points. The exact procedure is given in Algorithm \ref{alg:PCA_init}. Results from the ablation study on the PCA initialization described here are presented in Table \ref{table:PCA_init_ablation}.

\begin{algorithm}
 \caption{Initialization of inducing points in the SVGP-VAE\label{alg:PCA_init} (rotated MNIST experiment)}
\SetAlgoLined
\SetKwInOut{Input}{input}
\SetKwInOut{Output}{output}
\Input{PCA matrix $\bm{D}$, number of inducing points per angle $n$, set of angles $ \{\frac{2\pi k}{Q} \: | \: k=1,...,Q\}$}
\BlankLine
$\Ub_{init} = [\:] $

\# sample $m=n \cdot Q$ points from empirical distribution of each principle component

 \For{$i=1,...,M$}{
    
    $ \Ub_{init} = \big[\Ub_{init}, \; \emph{sample}\big(\bm{D}[: \: , i], \: nr\_samples=n\big)\big]$

 }
 
 \# add column with angle information
 
 $\bm{a} = \big[\underbrace{2 \pi / Q, ..., 2 \pi / Q}_{n \times}, \: ... \: , \underbrace{2 \pi , ..., 2 \pi}_{n \times} \big]^T \in \R^{m} $
 
 $ \Ub_{init} = \big[\bm{a}, \; \Ub_{init \big]}$
 
 \Return{$\Ub_{init}$}
\end{algorithm}

\begin{table}[H]
\setlength{\tabcolsep}{10pt}
\centering
\begin{tabular}{l l l}
\toprule
 & \textbf{PCA init} & \textbf{random init}  \\
\midrule
  \textbf{GP-VAE} \cite{Casale2018GaussianAutoencoders} & $0.0370 \pm 0.0012$  & $0.0374 \pm 0.0009$ \\[0.08cm]
\textbf{SVGP-VAE} & $0.0251 \pm 0.0005$ &  $0.0272 \pm 0.0006$ \\[0.08cm]
\bottomrule
\end{tabular}
\caption[Rotated MNIST - PCA initialization of GP-LVM vectors and inducing points]{A comparison of different initialization regimes for GP-LVM vectors and inducing points in the rotated MNIST experiment. For random initialization, a Gaussian distribution with mean $0$ and standard deviation $1.5$ was used.}
\label{table:PCA_init_ablation}
\end{table}

\subsection{SVGP-VAE latent space visualization}

In Figure \ref{fig:SVGP-VAE_latents}, we depict two-dimensional t-SNE \citep{vanDerMaaten2008} embeddings of SVGP-VAE latent vectors ($L=16$). Visualized here are latent vectors for training data of the \emph{five-digit} version of the rotated MNIST dataset ($N=20250$). As expected, the model clusters images based on the digit identity. More interestingly, SVGP-VAE also seems to order images within each digit cluster with respect to angles. For example, looking at the cluster of the digit 3 (the blue cluster in the middle of the lower plot), we observe that embeddings of rotated images are ordered continuously from $0$ to $2\pi$ as we move in clockwise direction around the circular shape of the cluster.

\begin{figure}[H]
\centering
\includegraphics[width=\textwidth]{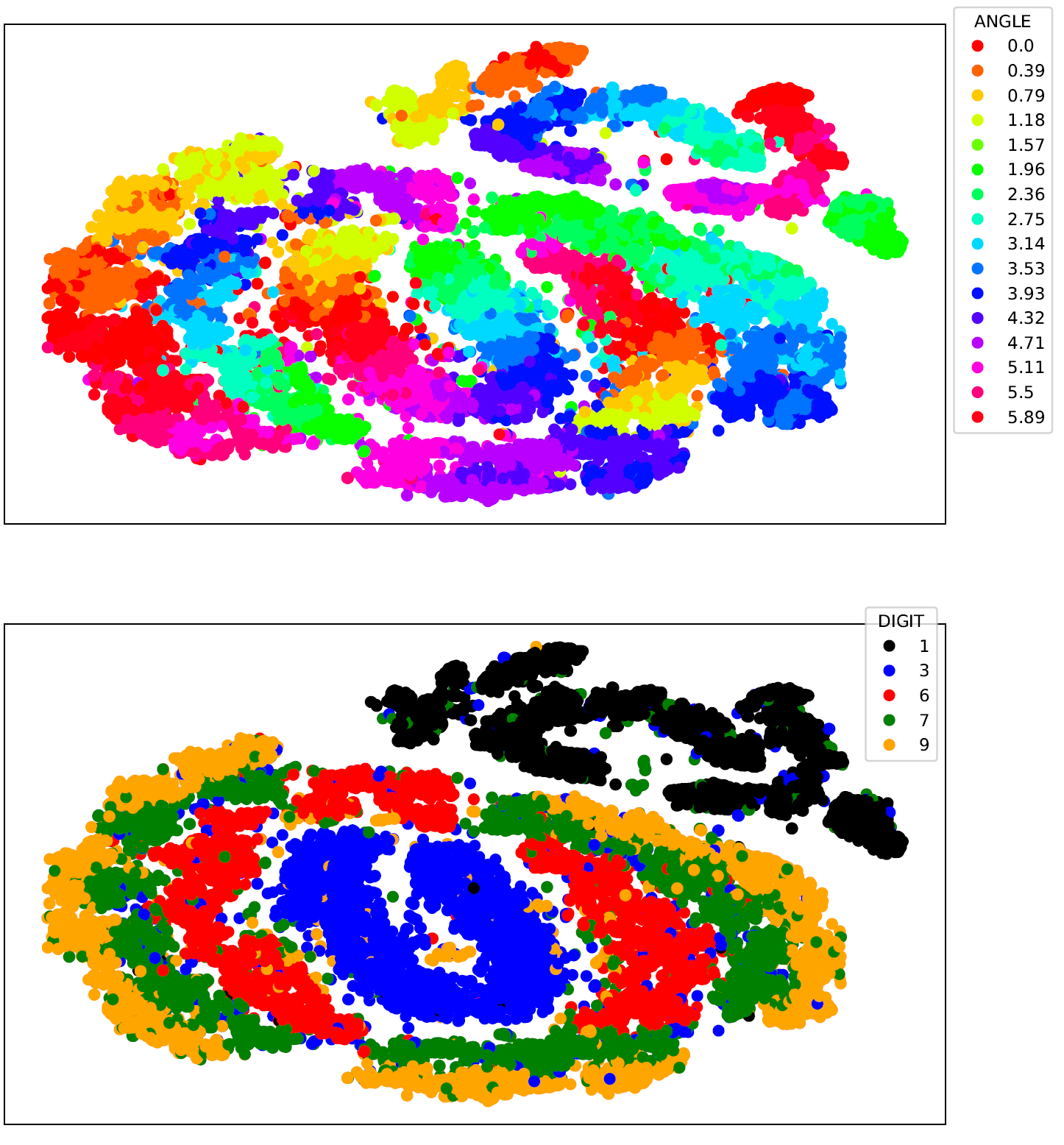}
\caption[Rotated MNIST - SVGP-VAE latent space visualization]{t-SNE embeddings of SVGP-VAE latent vectors on the training data for rotated MNIST. On the upper scatter plot, each image embedding is colored with respect to its associated angle. On the lower scatter plot, each image embedding is colored with respect to its associated digit. The t-SNE perplexity parameter was set to 50.}
\label{fig:SVGP-VAE_latents}
\end{figure}

\subsection{Rotated MNIST: generated images}
\label{sec:app_rot_MNIST_generations}

\begin{figure}[H]
\centering
\includegraphics[width=0.7\textwidth]{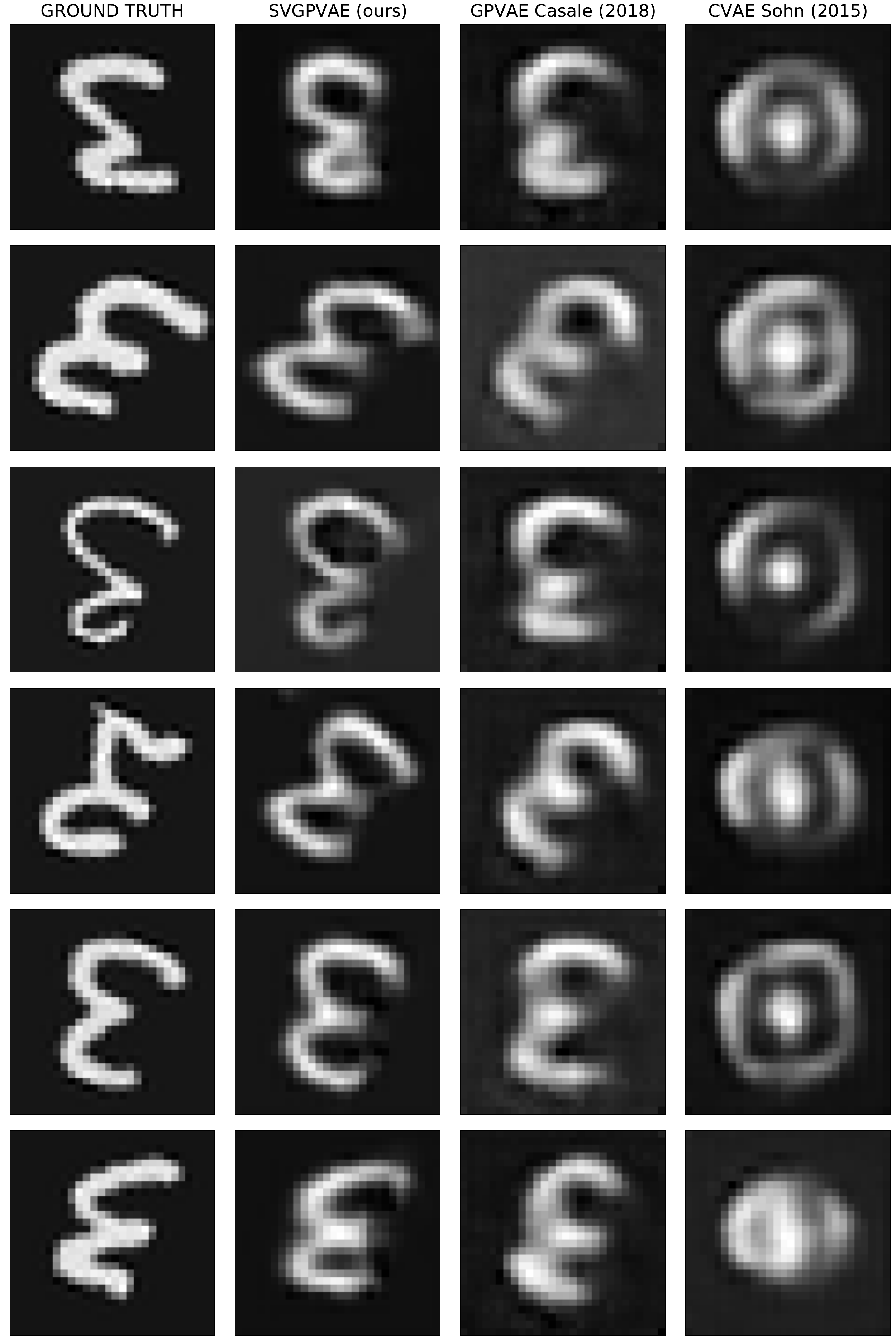}
\caption[Rotated MNIST - generated images 3]{Generated test images in the rotated MNIST experiment for all considered models.}
\label{fig:rot_MNIST_recons_main_3}
\end{figure}

\subsection{Bias analysis of MC estimators in SVGP-VAE}
\label{sec:app_bias}

\begin{figure}
\centering
\includegraphics[width=\textwidth]{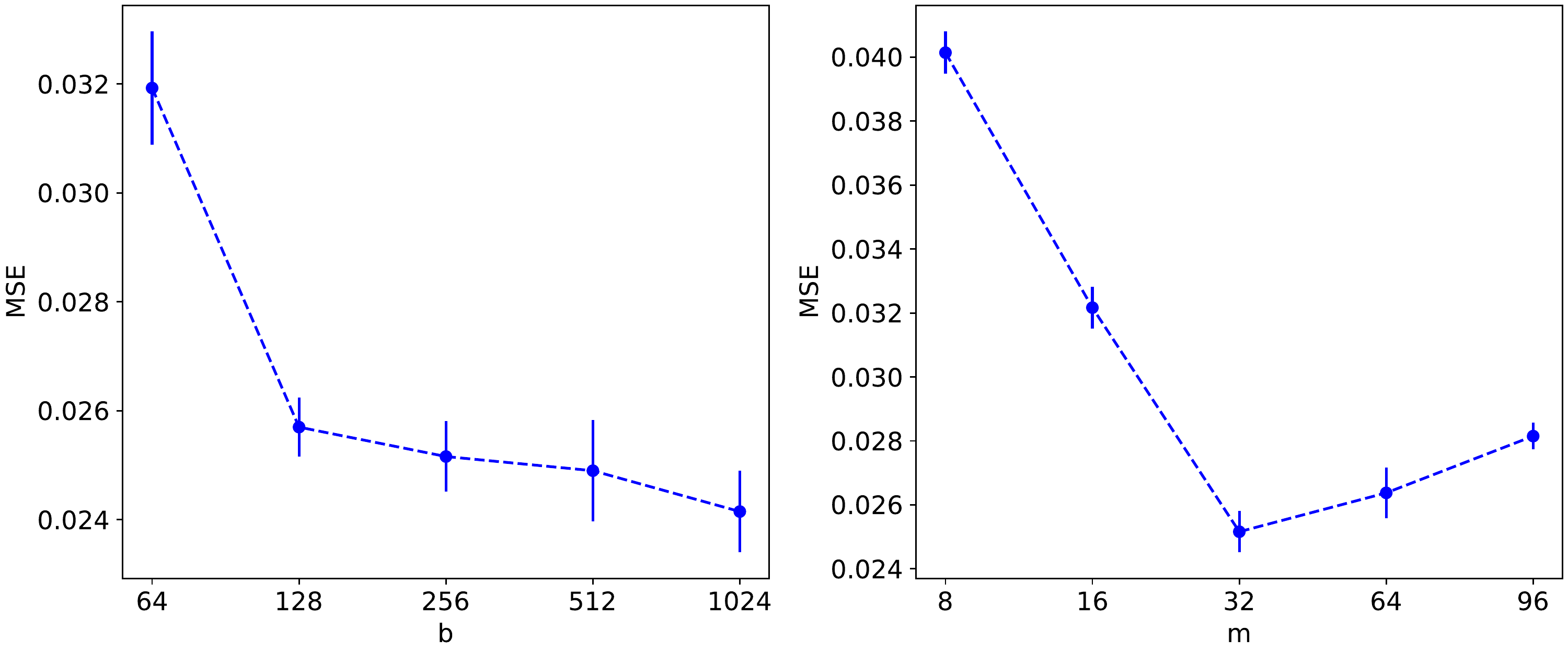}
\caption[Rotated MNIST - SVGP-VAE results for varying batch size and number of inducing point]{SVGP-VAE results on the rotated MNIST dataset (digit 3) for varying batch size (left) and number of inducing points (right). For the batch size experiment, $m$ was set to 32. For the inducing points experiment, $b$ was set to 256. For each configuration, a mean MSE together with a standard deviation based on 5 runs is shown.}
\label{fig:SVGP-VAE_b_M}
\end{figure}

Here we look at some additional experiments that were conducted to get a better understanding of the SVGP-VAE model. Depicted in Figure \ref{fig:SVGP-VAE_b_M} are the results when varying the batch size and the number of inducing points. We first notice that the SVGP-VAE performance improves as the batch size is increased. As pointed out in Section~\ref{sec:best_ELBOs}, this is a consequence of the Monte Carlo estimators from (\ref{eq:MC_estimators}) used in $q_S$ whose quality depends on the batch size. While the dependence on the batch size can surely be seen as one limitation of the model, it is encouraging to see that the model achieves good performance already for a reasonably small batch size (e.g., $b=128$). Moreover, the batch size parameter in the SVGP-VAE offers a simple and intuitive way to navigate a trade-off between performance and computational demands. If one is more concerned regarding the performance, a higher batch size should be used. On the other hand, if one only has limited computational resources at disposal, a lower batch size can be utilized resulting in a faster and less memory-demanding model.

Looking at the plot with the varying number of inducing points next, we observe that the model achieves a solid performance with as little as 16 inducing points on the rotated MNIST data. However, increasing the number of inducing points $m$ starts to have a negative impact on the performance after a certain point. This can be partly attributed to numerical issues that arise during training --- the higher the $m$, the more numerically unstable the inducing point kernel matrix $\Kb_{mm}$ becomes. Moreover, since the number of inducing points equals the dimension of the Monte Carlo estimators in (\ref{eq:MC_estimators}), increasing $m$ results in a larger dimension of the space, potentially increasing the complexity of the estimation problem.

To better understand the effect of the number of inducing points $m$ on the quality of estimation in our proposed MC estimators, we investigate here the trajectory of the bias throughout training. To this end, for each epoch $i$ an estimator $\mub^{l}_{j,i}$ is calculated for each latent channel $l$ and for each batch $j$. Additionally, the true value $\mub_{T,i}^l$ is obtained (based on the entire dataset) for every epoch and every latent channel using model weights from the end of the epoch. The bias for the $l$-th latent channel and $i$-th epoch is then computed as
\begin{align*}
    \bm{b}_i^l := \frac{1}{B}\sum_{j=1}^B \mub^{l}_{j,i} - \mub_{T,i}^l
\end{align*}
where $B := \lceil{\frac{N}{b}}\rceil$ represents the number of batches in a single epoch. Finally, for each epoch $i$ the $L1$ norms of the bias vectors for each latent channel are averaged $\bm{b}_i = \frac{1}{L}\sum_{l=1}^L \bm{b}_i^l$.

Moving averages of the resulting bias trajectories are depicted in Figure \ref{fig:rot_MNIST_bias_SVGP-VAE_vary_m}. For comparison purposes, each trajectory is normalized by the number of inducing points used. Notice how for smaller $m$, the bias trajectories display the expected behavior and converge (or stay close) to 0. Conversely, for larger numbers of inducing points ($m=64$ and $m=96$), the bias is larger and does not decline as the training progresses. This suggests that the proposed estimation might get worse in larger dimensions.

However, despite seemingly deteriorating approximation in higher dimensions, it is also evident that the approximation does not completely break down --- the model still achieves a solid performance even for a larger number of inducing points. Nevertheless, we note that getting a better theoretical grasp of the quality of estimation or reparameterizing the SVGP-VAE ELBO in a way such that these estimators are no longer needed could be a fruitful area of future work.

\begin{figure}[H]
\centering
\includegraphics[width=\textwidth]{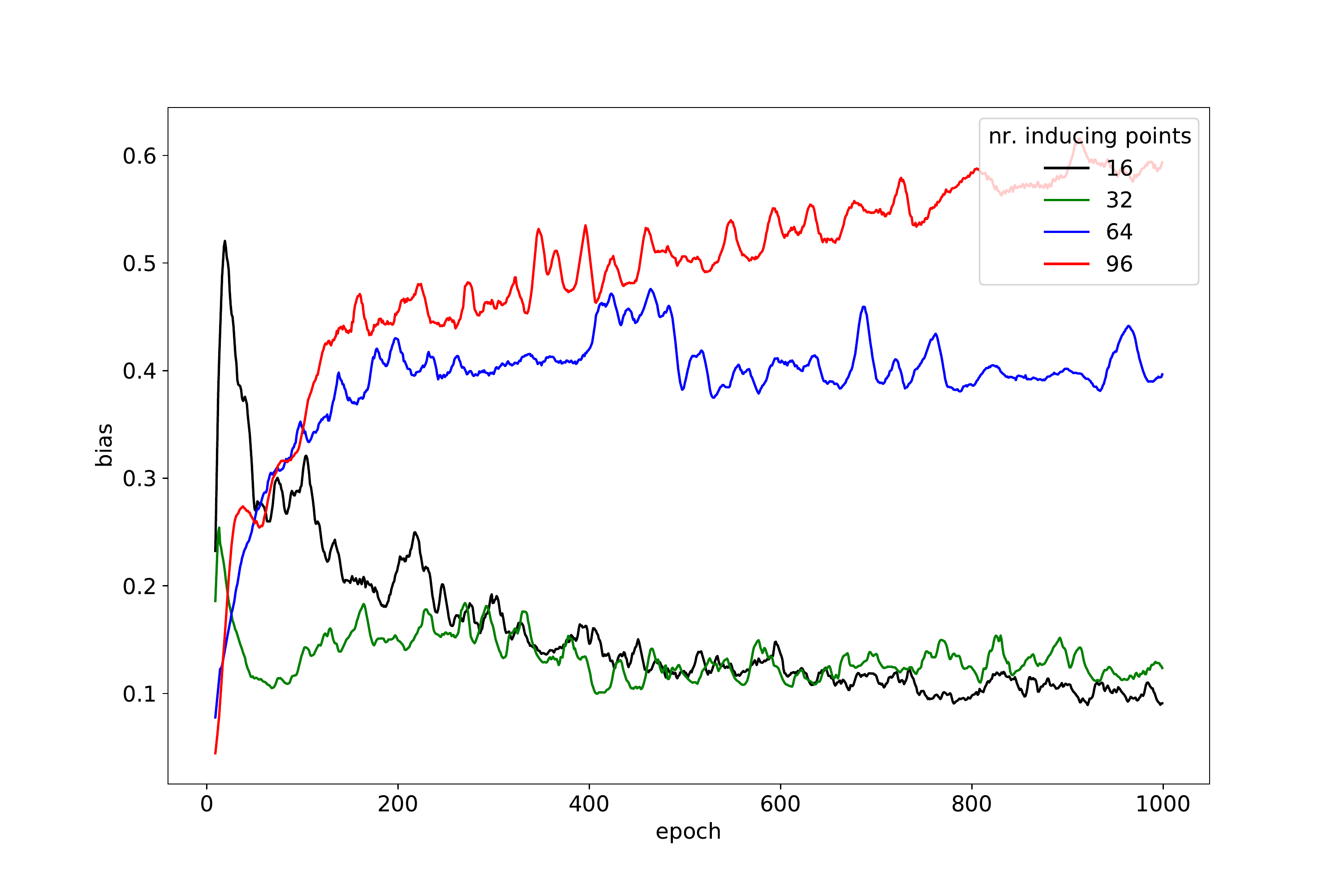}
\caption{Bias trajectories in the SVGP-VAE model for a varying number of inducing points. For all runs, the batch size was set to 256.}
\label{fig:rot_MNIST_bias_SVGP-VAE_vary_m}
\end{figure}

\subsection{Deep sparse GP from \cite{Hensman2013GaussianData} for conditional generation}
\label{sec:app_Hensman_baseline}
In Section \ref{sec:latent_sparse_GP}, we demonstrate that a sparse GP approach from \cite{Hensman2013GaussianData} cannot be used in the GP-VAE framework as it does not lend itself to amortization. In Section \ref{sec:app_vanishing_SVGP-VAE_Hensman}, we then provide a detailed derivation of this phenomenon. Here, we leave out the amortization completely and consider directly the sparse GP from \citet{Hensman2013GaussianData}. To this end, we modify the ELBO in eq. (4) in \cite{Hensman2013GaussianData}. To model our high-dimensional data $\yb_i \in \R^K$, we utilize a deep likelihood parameterized by a neural network $\psi: \R^L \xrightarrow[]{} \R^K$ (instead of a simple Gaussian likelihood). Moreover, we replicate a GP regression L times (across all latent channels), which yields the following objective function

\begin{align*} 
    \ELBO(\Ub, \psi, \theta, \mub^{1:L}, \Ab^{1:L}, \sigma) =  &\sum_{i=1}^N \bigg\{ \log\mathcal{N}\big(\yb_i \:|\: \psi( \bm{m}_i), \: \sigma^{2} \Ib \big)  - \frac{1}{2 \sigma^{2}} \sum_{l=1}^L\: (\Tilde{k}_{ii} + Tr(\textbf{A}^l\: \Lambda_i))  \bigg\} \: - \\& \sum_{l=1}^L \KL\big(q_S^l(\fb_m|\cdot) \: || \: p_\theta(\fb_m|\cdot)\big)
\end{align*}

where $\bm{m}_i := [\bm{k}_i \Kb_{mm}^{-1}\bm{\mu}^1, \: ... \:, \bm{k}_i \Kb_{mm}^{-1}\bm{\mu}^L ]^T \in \R^L$. Also recall that $q^{l}_S(\fb_m | \cdot) = \mathcal{N}(\fb_m | \bm{\mu^l}, \bm{A^l})$, $p_{\theta}(\fb_m | \cdot) = \mathcal{N}(\fb_m | \bm{0}, \: \Kb_{mm})$, $\Lambda_i := \Kb_{mm}^{-1}\bm{k}_i \bm{k}_i^T\Kb_{mm}^{-1}$ and $\Tilde{k}_{ii}$ is the $i$-th diagonal element of $\Kb_{NN} - \Kb_{Nm} \Kb_{mm}^{-1}\Kb_{mN}$.

For a test point $\bm{x_{*}}$, we first obtain $\bm{m}_* = [\bm{k}_* \Kb_{mm}^{-1}\bm{\mu}^1, \: ... \:, \bm{k}_* \Kb_{mm}^{-1}\bm{\mu}^L ]^T, \: \bm{k_*} = [k_{\theta}(\xb_*, \ub_1), ..., k_{\theta}(\xb_*, \ub_m)]^T \in \R^m$, and then pass it through the network $\psi$ to generate $\yb_*$.

For comparison purposes, the same number of latent channels ($L=16$) and the same architecture for the network $\psi$ as in our SVGP-VAE is used. We train this baseline model for 2000 epochs using the Adam optimizer and a batch size of 256.

The strong performance (see Table \ref{table:rot_MNIST_main}) of this baseline provides interesting new insights into the role of amortization in GP-VAE models. For the task of conditional generation, where a single GP prior is placed over the entire dataset, the amortization is not necessary, and one can modify existing sparse GP approaches \citep{Hensman2013GaussianData} to achieve good results in a computationally efficient way. Note that this is not the case for tasks like learning interpretable low-dimensional embeddings \citep{Pearce2019ThePixels} or time-series imputation \citep{Fortuin2019GP-VAE:Imputation}. For such tasks, the inference network is needed in order to be able to quickly obtain predictions for new test points without rerunning the optimization.

More thorough investigation of this baseline, its interpretation, and its comparison to the existing work on deep Gaussian Processes \citep{damianou2013deep, wilson2016deep} is left for future work.

\end{document}


\onecolumn
\aistatstitle{Instructions for Paper Submissions to AISTATS 2021: \\
Supplementary Materials}

\section{FORMATTING INSTRUCTIONS}

To prepare a supplementary pdf file, we ask the authors to use \texttt{aistats2021.sty} as a style file and to follow the same formatting instructions as in the main paper.
The only difference is that the supplementary material must be in a \emph{single-column} format.
You can use \texttt{supplement.tex} in our starter pack as a starting point, or append the supplementary content to the main paper and split the final PDF into two separate files.

Note that reviewers are under no obligation to examine your supplementary material.

\section{MISSING PROOFS}

The supplementary materials may contain detailed proofs of the results that are missing in the main paper.

\subsection{Proof of Lemma 3}

\textit{In this section, we present the detailed proof of Lemma 3 and then [ ... ]}

\section{ADDITIONAL EXPERIMENTS}

If you have additional experimental results, you may include them in the supplementary materials.

\subsection{The Effect of Regularization Parameter}

\textit{Our algorithm depends on the regularization parameter $\lambda$. Figure 1 below illustrates the effect of this parameter on the performance of our algorithm. As we can see, [ ... ]}

\vfill